\title{CausalLM is not optimal for in-context learning}
\author{Nan Ding \quad Tomer Levinboim \quad Jialin Wu \quad
Sebastian Goodman \quad Radu Soricut \\
  Google Research \\
  \texttt{\{dingnan,tomerl,jialinwu,seabass,rsoricut\}@google.com}}
\begin{document}
\maketitle
\begin{abstract}
Recent empirical evidence indicates that transformer based in-context learning performs better when using a prefix language model (prefixLM), in which in-context samples can all attend to each other, compared to causal language models (causalLM), which use auto-regressive attention that prohibits in-context samples to attend to future samples. While this result is intuitive, it is not understood from a theoretical perspective. In this paper we take a theoretical approach and analyze the convergence behavior of prefixLM and causalLM under a certain parameter construction. Our analysis shows that both LM types converge to their stationary points at a linear rate, but that while prefixLM converges to the optimal solution of linear regression, causalLM convergence dynamics follows that of an online gradient descent algorithm, which is not guaranteed to be optimal even as the number of samples grows infinitely. We supplement our theoretical claims with empirical experiments over synthetic and real tasks and using various types of transformers. Our experiments verify that causalLM consistently underperforms prefixLM in all settings.

\end{abstract}

\section{Introduction}
Transformer-based models~\citep{NIPS2017_3f5ee243} have become the default foundational model for various machine learning applications such as natural language processing~\citep{devlin2018bert,brown2020language,chowdhery2022palm} and computer vision~\citep{dosovitskiy2020image}. Beyond their traditional usage in machine learning applications, it has recently been discovered that pretraining large transformers on a vast amounts of data leads them to develop a striking ability 
referred to as in-context learning (ICL)~\citep{brown2020language}. Specifically, once such pretraining is complete, these models are able to solve new tasks at inference time (without changing their parameters) by simply ingesting a short sequence (prefix) of labeled examples from a task and then computing a prediction for a query example.

The ICL capability was first demonstrated by GPT-3~\citep{brown2020language}, where a causalLM (a Transformer decoder with auto-regressive attention masks) was used as the main model architecture. 
However, follow up work empirically found that restricting the auto-regressive masks on the entire sequence is too prohibitive and therefore proposed the so-called prefixLM~\citep{raffel2020exploring,tay2022unifying} which allows full attention within the prefix tokens.
Moreover, the latest models (such as PaLM2~\citep{palm2}) adopt a mixture of different LM objectives during pretraining to achieved state-of-art performance across a diverse set of tasks and capabilities.

However, beyond the few empirical results in those and related papers, there is yet no theoretical explanation that accounts for the different ICL behavior of prefixLM and causalLM.
Indeed, theoretical studies of ICL are difficult due to the complicated non-linearity of the (ordinary) transformer architecture. However, recent work~\citep{von2023transformers} focusing on ICL of linear regression was able to show
that a specifically designed parameter construction of a one-layer Linear Self-Attention (LSA) transformer can simulate a single step of gradient descent by using the in-context examples as training data.
Moreover, a different recent study~\citep{zhang2023trained} used gradient flow 
to prove that a randomly initialized LSA-transformer indeed converges to such a construction during training. 

In this paper, we continue the theoretical line of work above by investigating the convergence properties of ICL for both prefixLM and causalLM multi-layer LSA-transformers in a linear regression setting.
We summarizes our contributions as follows:
\begin{itemize}
\item We first present a clear, formal proof that establishes the relationship between a multi-layer LSA and multi-step gradient descents in linear regression. 
\item We then show that both causalLM and prefixLM based multi-layer LSA-transformers converge to their respective stationary points with linear rates of convergence.
We prove that the stationary point of prefixLM corresponds to the optimal least square solution of the linear regression problem, while the stationary points of causalLM correspond to the weights obtained along the iterations of online gradient descent with non-decaying step sizes. 
Importantly, the stationary points obtained by causalLM may not become optimal even as the number of in-context examples increases, which indicates that causalLM is not optimal for in-context learning. 
\item Finally, we verify the above theoretical insights by conducting experiments with LSA-transformers as well as ordinary softmax attention based transformers on various synthetic tasks including linear and non-linear regression, and multiclass classifications. We also compare causalLM and prefixLM ICL based on LLMs including T5~\citep{roberts2022t5x} and PaLM2~\citep{palm2}, as well as the multimodal model PaLI-X~\citep{Chen2023PaLIXOS}.
Our experimental results support our theoretical findings and consistently show the superiority of prefixLM over causalLM on ICL for such settings.
\end{itemize}

\section{Background}
\label{sec:background}
We begin by reviewing a few types of transformer attention and in-context learning (ICL), as well as a specific transformer construction for linear regression ICL by~\citep{von2023transformers} which our theories will be based on. 
The discussions of other related work are deferred to Appendix~\ref{sec:related}.

\subsection{Transformers: SSA, LSA, causalLM, and prefixLM}
\label{sec:transformers}
Given a sequence of input vectors $\Zb = \rbr{\zb_1, \ldots, \zb_n}$, the output of standard Softmax Self-Attention (SSA) layer is
\begin{align*}
    \zb_j \leftarrow \zb_j + \Pb \Vb \Zb \text{softmax}(\Zb^{\top} \Kb^{\top} \Qb \zb_j),
\end{align*}
where $\Pb, \Vb, \Kb, \Qb$ respectively corresponds to the output projection, value transformation, key transformation and query transformation. 

Since the softmax attention of standard transformers is non-linear, its theoretical analysis becomes complicated even for a single layer. 
For this reason, theoretical approaches to analyze transformers have often resorted to the Linear Self-Attention (LSA) layer~\citep{von2023transformers,zhang2023trained}, which simply drops the softmax function from the attention, 
\begin{align}
    \zb_j \leftarrow& \zb_j + \Pb \Vb \Zb (\Zb^{\top} \Kb^{\top} \Qb \zb_j)
    = \zb_j + \Pb \Vb \sum_{i=1}^n \zb_i \rbr{\zb_i^{\top} \Kb^{\top} \Qb \zb_j}. \label{eq:lsa}
\end{align}
Furthermore, since each input $\zb_j$ can attend to all positions $j \in \{1 \ldots n\}$, this form of attention is categorized as full (or bidirectional) attention, and is typically used in the transformer encoder.

On the other hand, a (linear) transformer decoder uses the \emph{auto-regressive} attention 
\begin{align}
    \zb_j \leftarrow \zb_j + \Pb \Vb \sum_{i=1}^j \zb_i \rbr{\zb_i^{\top} \Kb^{\top} \Qb \zb_j}. \label{eq:causal_lsa}
\end{align}
which restricts each token $\zb_j$ to attend only to previous positions (and itself) from $\{1 \ldots j\}$.
This restriction is due to the role of the decoder as a causal language model (causalLM) which predicts the next token in the context of the previously generated ones.

The original transformer involves both a full attention based encoder and an auto-regressive attention based decoder. However, prominent NLP research has often chosen either encoder-only (e.g. BERT~\citep{devlin2018bert}) or decoder-only (e.g. GPT~\citep{brown2020language}, PaLM~\citep{chowdhery2022palm}) models according to the task at hand. This is partially for the purpose of halving the parameter sizes. 

Another version of  attention, between full and auto-regressive, followed from the observation that some tasks can benefit from a prefix sequence such as context or prompt. That is, the input sequence $\Zb$ is composed of $n'$ prefix tokens $(\zb_1, \ldots, \zb_{n'})$ configured for the task, while the tokens $(\zb_{n'+1}, \ldots, \zb_n)$ represent the sample.
Specifically, prefixLM~\citep{raffel2020exploring} suggests the following attention (in its LSA version):
\begin{align*}
    \zb_j \leftarrow \zb_j + \Pb \Vb \sum_{i=1}^{\max(j, n')} \zb_i \rbr{\zb_i^{\top} \Kb^{\top} \Qb \zb_j},
\end{align*}
where $\max(j, n')$ ensures each prefix token $\zb_j$ with $j < n'$ can attend to all prefix tokens. 

\subsection{In-context learning}
\label{sec:icl}
A formal framework of in-context learning has been described in various existing literature such as \citep{garg2022can,zhang2023trained}. Here, we briefly review the problem setting and introduce  notation that will be used across the paper.

In-context learning refers to the 
ability of models to produce context-driven predictions at inference time.
That is, at inference time, a model is fed with a sequence consisting of input-label pairs and a query input $(\xb_1, y_1, \ldots, \xb_n, y_n, \xb_{query})$ and its goal is to predict the label $y_{query}$ of $\xb_{query}$ using the context examples $(\xb_1, y_1, \ldots, \xb_n, y_n)$ (specifically, without changing the model parameters).

\subsection{Linear regression in-context learners}
\label{sec:lricl}
Linear regression is a classical machine learning problem. Given a set of input-label pairs $\rbr{\xb_i, y_i}$, the goal is to find an optimal weight vector $\wb$ that minimizes the l2-loss: 
$$L(\wb) = \frac{1}{2n}\sum_{i=1}^n \| \wb\xb_i - y_i \|_2^2.$$ 
The gradient of the loss is $\nabla_{\wb} L = \frac{1}{n}\sum_{i=1}^n (\wb\xb_i - y_i)\xb_i^{\top}$, and a gradient descent algorithm with step size $\eta$ follows the update rule:
\begin{align}
\wb^{(l)} 
=& \wb^{(l-1)} + \frac{\eta}{n} \sum_{i=1}^n (y_i - \wb^{(l-1)} \xb_i) \xb_i^\top. \label{eq:prefix_update_w}
\end{align}
Using linear regression as a lens to study in-context learning was first proposed in ~\citep{garg2022can}, where the authors laid out an approach for training transformers to in-context learn a class of simple predictors, including linear regression. However, no theoretical study was provided.
More recently, and most relevant to our work, ~\citep{von2023transformers} proposed a succinct construction that demonstrates how a single LSA layer can effectively implement a single gradient descent step. According to their setup the input is formulated as
\begin{align}
\Zb = (\zb_1^{(0)}, \ldots, \zb_n^{(0)}), \;
\text{where}\;
\zb_j^{(0)} = \begin{pmatrix}\xb_j \\ y_j \end{pmatrix} 
\label{eq:icl_input}
\end{align}
and the parameter matrices of \eqref{eq:lsa} are set as:
\begin{align}    
\Kb &= \Qb = \begin{pmatrix} \Ib_{d \times d} & {\bf 0} \\ 0 & 0\end{pmatrix}, \Vb = \begin{pmatrix} {\bf 0}_{d \times d} & {\bf 0} \\ \wb^{(0)} & -1\end{pmatrix}, \Pb = \frac{\eta}{n} \Ib, \label{eq:constructed_lsa}
\end{align}
where $\wb^{(0)}$ is an initial weight vector.
\citep{von2023transformers} then showed that this configuration results in an update of their so-called transformed target $y_j \leftarrow y_j + \eta \rbr{\nabla_{\wb^{(0)}} L} \xb_j$, and that this target update is equivalent to the one performed by a single-step gradient descent of linear regression.

Although the construction of~\citep{von2023transformers} connected LSA-based ICL to the gradient descent of linear regression, the "transformed target" view seems unnatural\footnote{The traditional ML formulation updates the weight vector or the model prediction, while the groundtruth target remains fixed.} to work with. Moreover, their extension from single-layer to multi-layer LSA is unfortunately unclear. 

\section{Multi-layer in-context learner}
\label{sec:ml-icl}

In this section, we provide a formal proof that a multi-layer LSA under the construction of~\citep{von2023transformers} progresses identically to  multi-step gradient descent. 

Instead of the "transformed target" view, the following proposition explicitly connects the GD weights of~\eqref{eq:prefix_update_w} to the outputs of the multi-layer LSA under the constructions of $\Kb$, $\Qb$, $\Pb$ and $\Vb$ in \eqref{eq:constructed_lsa}. 
Note that we keep $\wb^{(0)}=0$ in the proposition because it simplifies the equations and makes the outputs more meaningful. However, such specification is not mandatory, and we provide general propositions, for arbitrary $\wb^{(0)}$, in Appendix~\ref{sec:general_w0}.
\begin{proposition}
\label{prop:muicl}
For a multi-layer LSA satisfying the construction \eqref{eq:constructed_lsa} and with $\wb^{(0)} = 0$, if its input $\Zb$ is formatted as~\eqref{eq:icl_input}, then its $l$-th layer output is $\zb_j^{(l)} = (\xb_j^\top, \delta_j^{(l)})^\top$, where $\delta_j^{(l)} = y_j - \wb^{(l)} \xb_j$ and $\wb^{(l)}$ is the $l$-th updated weight from the gradient descents update rule in~\eqref{eq:prefix_update_w}.
\end{proposition}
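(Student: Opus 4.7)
The plan is to prove the proposition by induction on the layer index $l$. The base case $l=0$ is immediate: by assumption $\zb_j^{(0)} = (\xb_j^\top, y_j)^\top$, and since $\wb^{(0)} = 0$ we have $\delta_j^{(0)} = y_j - \wb^{(0)} \xb_j = y_j$, matching the last coordinate of $\zb_j^{(0)}$.

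For the inductive step, I would assume $\zb_j^{(l-1)} = (\xb_j^\top, \delta_j^{(l-1)})^\top$ with $\delta_j^{(l-1)} = y_j - \wb^{(l-1)} \xb_j$ for every $j$, and then plug this structure into the LSA update \eqref{eq:lsa} with the matrices from \eqref{eq:constructed_lsa}. First I would compute the key-query inner product: since $\Kb = \Qb$ is the block projector onto the first $d$ coordinates, $\zb_i^\top \Kb^\top \Qb \zb_j$ simplifies to $\xb_i^\top \xb_j$, independently of $l$. Next I would compute $\Vb \zb_i$; because $\wb^{(0)} = 0$ and the top-right block of $\Vb$ is zero, $\Vb \zb_i = (\mathbf{0}^\top,\, -\delta_i^{(l-1)})^\top$. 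Multiplying by $\Pb = (\eta/n)\Ib$, the additive update applied to $\zb_j^{(l-1)}$ is $\bigl(\mathbf{0}^\top,\, -\tfrac{\eta}{n}\sum_{i=1}^n \delta_i^{(l-1)} \xb_i^\top \xb_j\bigr)^\top$. Hence the first $d$ coordinates of $\zb_j^{(l)}$ stay equal to $\xb_j$, and the last coordinate becomes
\begin{equation*}
\delta_j^{(l)} \;=\; \delta_j^{(l-1)} - \frac{\eta}{n}\sum_{i=1}^n \delta_i^{(l-1)}\, \xb_i^\top \xb_j \;=\; (y_j - \wb^{(l-1)} \xb_j) - \frac{\eta}{n}\sum_{i=1}^n (y_i - \wb^{(l-1)} \xb_i)\, \xb_i^\top \xb_j.
\end{equation*}

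Finally I would match this expression against the gradient descent rule \eqref{eq:prefix_update_w}: multiplying \eqref{eq:prefix_update_w} on the right by $\xb_j$ gives $\wb^{(l)} \xb_j = \wb^{(l-1)} \xb_j + \tfrac{\eta}{n}\sum_i (y_i - \wb^{(l-1)} \xb_i)\, \xb_i^\top \xb_j$, so $y_j - \wb^{(l)} \xb_j$ equals exactly the displayed expression for $\delta_j^{(l)}$, closing the induction.

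There is no real obstacle here; the argument is an unwinding of the block structure. The only step that requires a bit of care is verifying that the first $d$ coordinates are genuinely preserved across layers (so that the $\xb_j$ blocks seen by layer $l$ are the original ones), which is what makes the simple form $\zb_i^\top \Kb^\top \Qb \zb_j = \xb_i^\top \xb_j$ hold at every layer and lets the induction close cleanly. The condition $\wb^{(0)} = 0$ is used only to ensure the bottom-left block of $\Vb$ contributes nothing; the general $\wb^{(0)} \neq 0$ case deferred to Appendix~\ref{sec:general_w0} would just redefine $\delta_j^{(l)}$ to absorb a constant shift and follow the same calculation.
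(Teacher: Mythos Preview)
Your proposal is correct and follows essentially the same approach as the paper: plug the constructed $\Kb,\Qb,\Vb,\Pb$ into the LSA update, observe that the first $d$ coordinates are preserved, derive the recursion $\delta_j^{(l)} = \delta_j^{(l-1)} - \tfrac{\eta}{n}\sum_i \delta_i^{(l-1)}\xb_i^\top\xb_j$, and close the loop with~\eqref{eq:prefix_update_w} by induction. The only cosmetic difference is that the paper introduces the auxiliary variable $\tilde y_j^{(l)} = y_j - \delta_j^{(l)}$ and proves $\tilde y_j^{(l)} = \wb^{(l)}\xb_j$ by induction, whereas you verify $\delta_j^{(l)} = y_j - \wb^{(l)}\xb_j$ directly by right-multiplying~\eqref{eq:prefix_update_w} by $\xb_j$; these are equivalent bookkeepings of the same computation.
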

\emph{Proof Sketch: }
Plugging in $\Kb$, $\Qb$, $\Pb$ and $\Vb$ of \eqref{eq:constructed_lsa} with $\wb^{(0)}=0$ and $\zb_j^{(l)} = (\xb_j^\top, \delta_j^{(l)})^\top$ into \eqref{eq:lsa}, we obtain that for all $l>0$,
\begin{align*}
   &\begin{pmatrix}\xb_j \\ \delta_j^{(l)} \end{pmatrix} = \begin{pmatrix}\xb_j \\ \delta_j^{(l-1)} \end{pmatrix} - \frac{\eta}{n} \sum_{i=1}^n \begin{pmatrix}{\bf 0} \\ \delta_i^{(l-1)} \end{pmatrix} \xb_i^\top \xb_j.
\end{align*}
Since $\zb_j$ never changes its first $d$-dimension corresponding to $\xb_j$, we can simplify it and focus only on $\delta_j^{(l)}$, which is the last output coordinate of the $j$-th LSA-layer,
\begin{align}
\delta_j^{(l)} =& \; \delta_j^{(l-1)} - \frac{\eta}{n} \sum_{i=1}^n \delta_i^{(l-1)} \xb_i^\top \xb_j, \label{eq:prefix_update_delta}
\end{align}
with $\delta_j^{(0)} = y_j$. Defining $\tilde{y}_j^{(l)} = y_j - \delta_j^{(l)}$ and rearranging \eqref{eq:prefix_update_delta}, we obtain $\tilde{y}_j^{(0)} =0$ and $\forall l>0$:
\begin{align}
\tilde{y}_j^{(l)} =& \; \tilde{y}_j^{(l-1)} + \frac{\eta}{n} \sum_{i=1}^n (y_i - \tilde{y}_i^{(l-1)}) \xb_i^\top \xb_j. \label{eq:prefix_update_y}
\end{align}
Finally, using \eqref{eq:prefix_update_y} and the fact that $\tilde{y}_j^{(0)} = 0 = \wb^{(0)} \xb_j$, it can be proved by induction that $\forall l: \tilde{y}_j^{(l)} = \wb^{(l)} \xb_j$. A complete proof is provided in Appendix \ref{sec:proof}. 

To summarize, the newly introduced variable $\tilde{y}_j^{(l)}$ is exactly the prediction of the $l$-th gradient descent weights $\wb^{(l)}$ for $\xb_j$ , and $\delta_j^{(l)}$ is the difference between the true label $y_j$ and the predicted $\tilde{y}_j^{(l)}$. Therefore, $\tilde{y}_j^{(l)}$ serves as a bridge to connect the LSA output $\delta_j^{(l)}$ and the GD weight $\wb^{(l)}$. 

So far, we have dealt with the behavior of LSA layers with full attention. In what follows, we move on to the practical setting of in-context learning, where the input contains not only $n$ in-context (training) examples in the format of \eqref{eq:icl_input}, but also an additional (test) query $\zb_{query}^{(0)} = (\xb_{query}^\top, 0)^\top$. In particular, we will focus on the two common ICL variants: prefixLM and causalLM, each with a different type of attention.

\subsection{PrefixLM ICL}

A prefixLM ICL treats the in-context examples $\Zb$ as the prefix and uses full attention on the first $n$ positions, so that they can each freely attend to each other. 
The last query vector $\zb_{query}$ can also attend to any example in $\Zb$, but cannot attend to itself\footnote{This is because the query does not contain a meaningful label. Attending to itself would cause it to include its last-dim input as a label, which would contaminate the resulting multi-layer prediction. This observation was not considered in~\citep{von2023transformers}.}. 
As a result, the updates of the prefixLM-ICL under the same construction follow \eqref{eq:prefix_update_delta}, with the outputs of the $l$-th layer being,
\begin{align*}
    \delta_j^{(l)} &= y_j - \tilde{y}_j^{(l)} = y_j - \wb^{(l)} \xb_j,\\
   \text{and}\;\; \delta_{query}^{(l)} &= -\tilde{y}_{query}^{(l)} = - \wb^{(l)} \xb_{query},
\end{align*}
where the initial $\tilde{y}_j^{(0)} = \tilde{y}_{query}^{(0)} = 0$. 

Intuitively, the dynamics of the prefixLM ICL is as follows: all $\tilde{y}_j^{(l)}$ starts as 0 at $l=0$, 
and gradually approach to the true label $y_j$ as $l$ increases, so that the difference (also as the output) $\delta_j^{(l)}$ gradually approaches to 0. At the same time, $\delta_{query}^{(l)}$ starts at 0, and gradually approaches to $-y_{query}$, the negation of the query label. Figure~\ref{fig:ml-icl} provides an illustration of these dynamics.

\begin{figure}
\vspace{-0.3in}
  \centering
  \includegraphics[width=0.47\linewidth]{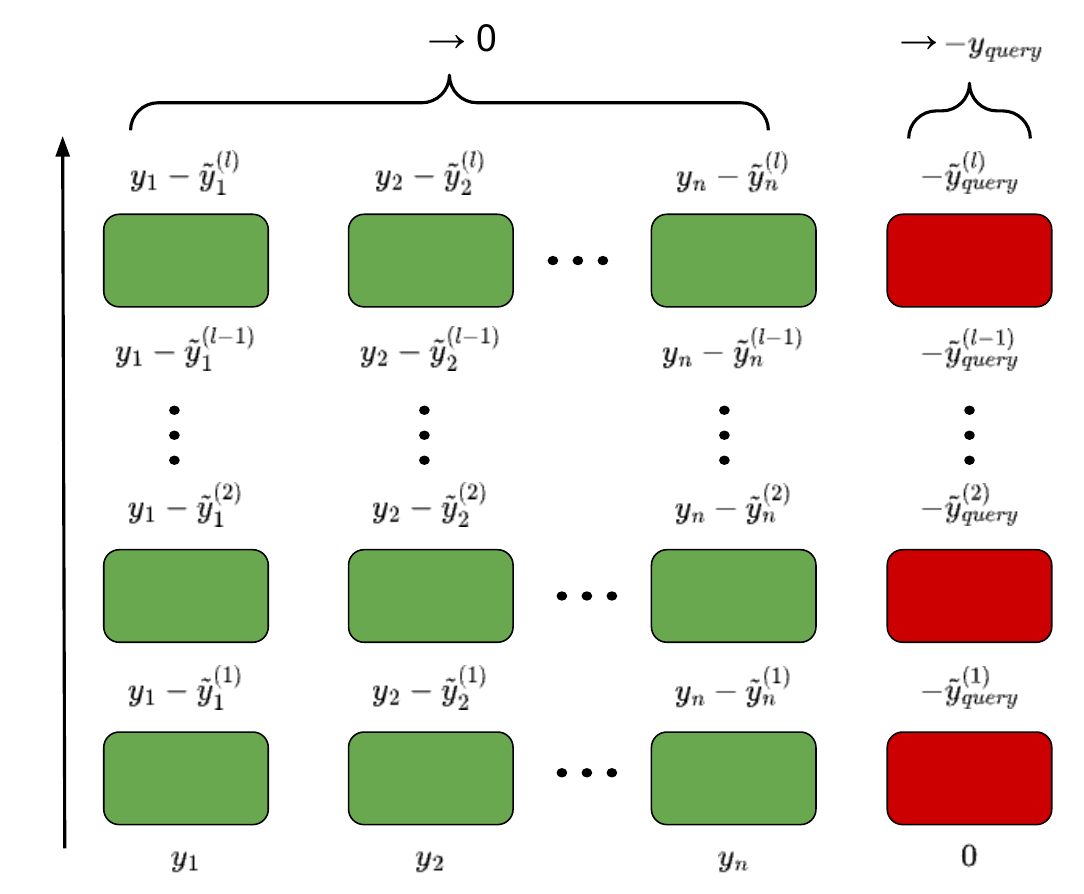}
   \caption{The inputs/outputs of a multi-layer in-context learner. We omitted $\xb_j$ and $\xb_{query}$ since they are unchanged.}
   \label{fig:ml-icl}
\end{figure}

\subsection{CausalLM ICL}
\label{sec:causallm}
A causalLM applies auto-regressive attention throughout the entire sequence. Therefore, plugging the same $\Kb$, $\Qb$, $\Pb$, $\Vb$ into \eqref{eq:causal_lsa}, the update rules of \eqref{eq:prefix_update_delta} and \eqref{eq:prefix_update_y} become:
\begin{align}
    \delta_j^{(l)} =& \; \delta_j^{(l-1)} - \frac{\eta}{n} \sum_{i=1}^j \delta_i^{(l-1)} \xb_i^\top \xb_j, \label{eq:causal_update_delta}\\ 
\tilde{y}_j^{(l)} =& \; \tilde{y}_j^{(l-1)} + \frac{\eta}{n} \sum_{i=1}^j (y_i - \tilde{y}_i^{(l-1)}) \xb_i^\top \xb_j \label{eq:causal_update_y}
\end{align}
\footnote{There is another way of update which changes $\eta/n$ to $\eta/j$ for the $j$-th example. We provide more details in Appendix~\ref{sec:causallm2} and show it performs worse than the main version in \eqref{eq:causal_update_delta}.}with $\delta_j^{(l)} = y_j - \tilde{y}_j^{(l)}$. 
Moreover, since different $\delta_j, \tilde{y}_j$ are exposed to different ranges of inputs, there is no uniform $\wb$ as in~\eqref{eq:prefix_update_w} that is associated with all $\tilde{y}_j$. Instead, if we define $\wb_j$ for each different position $j$ with $\wb_j^{(0)}=0$ and
\begin{align}    
\wb_j^{(l)} = \wb_j^{(l-1)} + \frac{\eta}{n} \sum_{i=1}^j (y_i - \wb_i^{(l-1)} \xb_i) \xb_i^\top \label{eq:causal_update_w}
\end{align}
then we have the following proposition: 
\begin{proposition}
\label{prop:muicl-causal}
For a multi-layer causalLM-LSA satisfying \eqref{eq:constructed_lsa} with $\wb^{(0)} = 0$, if its input $\Zb$ is formatted as~\eqref{eq:icl_input}, then its $l$-th layer output is $\zb_j^{(l)} = (\xb_j^\top, \delta_j^{(l)})^\top$, where $\delta_j^{(l)} = y_j - \wb_j^{(l)} \xb_j$ and $\wb_j^{(l)}$ follow~\eqref{eq:causal_update_w}.
\end{proposition}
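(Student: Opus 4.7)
The plan is to mimic the proof of Proposition~\ref{prop:muicl} closely, the only substantive change being that under auto-regressive attention the single global iterate $\wb^{(l)}$ must be replaced by position-indexed iterates $\wb_j^{(l)}$. The overall structure has four steps: (i) plug the construction \eqref{eq:constructed_lsa} into the causal LSA update \eqref{eq:causal_lsa} and verify that the first $d$ coordinates of $\zb_j^{(l)}$ are preserved; (ii) extract the scalar recursion \eqref{eq:causal_update_delta} for the last coordinate $\delta_j^{(l)}$; (iii) change variables to $\tilde{y}_j^{(l)} := y_j - \delta_j^{(l)}$ to obtain \eqref{eq:causal_update_y} with $\tilde{y}_j^{(0)} = 0$; and (iv) prove by induction on $l$ that $\tilde{y}_j^{(l)} = \wb_j^{(l)} \xb_j$ for every $j$, from which the claim $\delta_j^{(l)} = y_j - \wb_j^{(l)} \xb_j$ follows.

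For step (i)--(iii), I would reuse the block-matrix computation that led to \eqref{eq:prefix_update_delta}: with the same $\Kb,\Qb,\Vb,\Pb$, the only effect of swapping full attention for causal attention is to replace the outer sum $\sum_{i=1}^{n}$ by the truncated sum $\sum_{i=1}^{j}$, giving \eqref{eq:causal_update_delta} directly. The substitution $\tilde{y}_j^{(l)} := y_j - \delta_j^{(l)}$ is mechanical and yields \eqref{eq:causal_update_y}. The core of the proof is therefore step (iv). The base case $l = 0$ is immediate from $\tilde{y}_j^{(0)} = 0 = \wb_j^{(0)}\xb_j$. For the inductive step, assuming $\tilde{y}_i^{(l-1)} = \wb_i^{(l-1)} \xb_i$ for all $i$, substitution into \eqref{eq:causal_update_y} gives
\begin{align*}
\tilde{y}_j^{(l)} \;=\; \wb_j^{(l-1)} \xb_j + \frac{\eta}{n}\sum_{i=1}^{j}(y_i - \wb_i^{(l-1)} \xb_i)\xb_i^\top \xb_j \;=\; \Big(\wb_j^{(l-1)} + \frac{\eta}{n}\sum_{i=1}^{j}(y_i - \wb_i^{(l-1)} \xb_i)\xb_i^\top\Big)\xb_j,
\end{align*}
and the parenthesised expression is exactly $\wb_j^{(l)}$ by \eqref{eq:causal_update_w}, closing the induction.

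The only conceptual subtlety, compared to the prefix case, is notational: the inductive hypothesis must be applied at the summation index $i$ with its own position-specific weight $\wb_i^{(l-1)}$, rather than at a single shared weight $\wb^{(l-1)}$. What makes the argument go through cleanly is the deliberate matching between the truncation in the causal attention sum and the truncation in the definition \eqref{eq:causal_update_w} of $\wb_j^{(l)}$; the fact that $\wb_j^{(l-1)}$ is the weight at position $j$ on the left side of the update, whereas the inner error terms $y_i - \wb_i^{(l-1)} \xb_i$ use the weight at position $i \leq j$, is the precise reason that the causal update is not expressible as a single gradient descent step but rather as the online update that will be studied in the convergence analysis to follow. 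I do not anticipate a genuine technical obstacle; once the position-dependent objects are set up correctly, the proof is a bookkeeping induction of essentially the same length as that of Proposition~\ref{prop:muicl}.
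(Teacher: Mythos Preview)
Your proposal is correct and follows essentially the same approach as the paper: derive \eqref{eq:causal_update_delta}--\eqref{eq:causal_update_y} from the causal LSA update with the constructed $\Kb,\Qb,\Vb,\Pb$, then prove $\tilde{y}_j^{(l)} = \wb_j^{(l)}\xb_j$ by induction on $l$, applying the hypothesis at each index $i\le j$ with its own $\wb_i^{(l-1)}$. Your write-up is in fact slightly more careful than the paper's appendix proof, which contains minor typos in the summation limits.
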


The proof of Proposition~\ref{prop:muicl-causal} is provided in Appendix~\ref{sec:proof}. Similar to prefixLM-ICL, causalLM-ICL also has $\tilde{y}_j^{(l)} = \wb_j^{(l)} \xb_j$, and
\begin{align*}
\delta_{query}^{(l)} &= - \tilde{y}^{(l)}_{query} = -\wb_n^{(l)} \xb_{query}.
\end{align*}

In summary, causalLM-ICL and prefixLM-ICL are associated with different update rules: $\wb_j^{(l)}$ follows~\eqref{eq:causal_update_w} while $\wb^{(l)}$ follows~\eqref{eq:prefix_update_w}. Specifically, in causalLM, it can be seen that the $\wb_i^{(l-1)}$ corresponding to the first positions are biased due to restricted access to only a few data points and furthermore, that these biases are propagated to later positions by \eqref{eq:causal_update_w}. 
In prefixLM on the other hand, each position has access to all the data and a single $\wb^{(l)}$ can be used across the entire sequence as in \eqref{eq:prefix_update_w}. 
Although Eq. \eqref{eq:prefix_update_w} and Eq. \eqref{eq:causal_update_w} only hold for the structured LSA case, the profound difference between causalLM and prefixLM stems from their architectural difference and therefore we believe extends to general transformers, as indicated by our experimental results in Section~\ref{sec:experiment}.

\section{Convergence of the multi-layer in-context learners}
\label{sec:theory-ml-icl}
In this section, we prove that both multi-layer prefixLM and causalLM converge to their respective stationary points with increasing layers (and with linear rates). In addition, we show that the stationary point of prefixLM corresponds to the 
optimal 
least-square solution of the linear regression problem, while the ones corresponding to causalLM are equivalent to the iterative weights of online gradient descent of linear regression, which are 
known to be sub-optimal for a limited number of examples.

\subsection{Convergence of the prefixLM ICL}
\label{sec:theory-prefixlm}
The fact that a multi-layer prefixLM computation exactly follows the update rule of $\wb^{(l)}$ as in \eqref{eq:prefix_update_w}, implies that the layer outputs of prefixLM have the same dynamics of 
multi-step gradient descent on a linear regression problem.
The convergence properties of such dynamics are well-known, and are  stated in the following proposition:
\begin{proposition}
\label{prop:prefix-icl}
If $\wb^{(l)}$ follows the iterative updates of \eqref{eq:prefix_update_w}, then there exists a stationary point $\wb^*$ with coefficients satisfying:
\begin{align*}
   \yb \Xb^\top = \wb^* \Xb \Xb^\top,
\end{align*}
where $\yb = (y_1, \ldots, y_n)$ and $\Xb = (\xb_1, \ldots, \xb_n)$. Furthermore, the iterative weights $\wb^{(l)}$ converge to $\wb^*$ with a linear rate of convergence:
\begin{align*}
   \wb^{(l)} - \wb^* = (\wb^{(l-1)} - \wb^*)(\Ib - \frac{\eta}{n} \Xb \Xb^\top ).
\end{align*}
\end{proposition}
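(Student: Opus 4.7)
The plan is to recast the per-sample update \eqref{eq:prefix_update_w} in matrix form and then exploit the fact that the resulting map is affine. Writing $\Xb = (\xb_1, \ldots, \xb_n) \in \R^{d \times n}$ and $\yb = (y_1, \ldots, y_n) \in \R^{1 \times n}$, the sum on the right of \eqref{eq:prefix_update_w} collapses to
\begin{align*}
\wb^{(l)} = \wb^{(l-1)} + \frac{\eta}{n}\bigl(\yb \Xb^\top - \wb^{(l-1)} \Xb \Xb^\top\bigr),
\end{align*}
so the iteration becomes the affine map $\wb \mapsto \wb\bigl(\Ib - \tfrac{\eta}{n}\Xb\Xb^\top\bigr) + \tfrac{\eta}{n}\yb\Xb^\top$.

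First, I would identify a stationary point by setting $\wb^{(l)} = \wb^{(l-1)} = \wb^*$: the increment cancels and one is left precisely with the normal-equation condition $\yb\Xb^\top = \wb^*\Xb\Xb^\top$. Existence is immediate, since $\yb\Xb^\top$ is a linear combination of rows of $\Xb^\top$ and therefore lies in the row space of $\Xb$, which coincides with the row space of the symmetric matrix $\Xb\Xb^\top$; this makes the linear system consistent. When $\Xb\Xb^\top$ is invertible one recovers the familiar least-squares formula $\wb^* = \yb\Xb^\top(\Xb\Xb^\top)^{-1}$, while in the rank-deficient case any consistent solution suffices for the statement of the proposition.

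Next, I would subtract the fixed-point identity from the matrix update to kill the constant term: inserting $\yb\Xb^\top = \wb^*\Xb\Xb^\top$ into the update for $\wb^{(l)}$ and rearranging produces
\begin{align*}
\wb^{(l)} - \wb^* = (\wb^{(l-1)} - \wb^*)\bigl(\Ib - \tfrac{\eta}{n}\Xb\Xb^\top\bigr),
\end{align*}
which is exactly the recurrence claimed in the proposition. Iterating gives $\wb^{(l)} - \wb^* = (\wb^{(0)} - \wb^*)\bigl(\Ib - \tfrac{\eta}{n}\Xb\Xb^\top\bigr)^l$, so geometric (linear-rate) decay of the error follows whenever $\eta$ is chosen within the standard range $0 < \eta < 2n/\lambda_{\max}(\Xb\Xb^\top)$, ensuring that the nonzero eigenvalues of $\Ib - \tfrac{\eta}{n}\Xb\Xb^\top$ lie strictly inside $(-1,1)$. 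I do not anticipate a serious obstacle: the whole argument is elementary matrix algebra once the update is vectorized. The only subtle point is the non-uniqueness of $\wb^*$ when $\Xb\Xb^\top$ is rank-deficient; in that regime the component of $\wb^{(l)} - \wb^*$ lying in $\ker(\Xb\Xb^\top)$ is preserved by the recurrence, but this is harmless because the ICL predictions depend only on $\wb\Xb$, i.e.\ on the image part.
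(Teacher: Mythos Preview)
Your proposal is correct and follows essentially the same route as the paper: vectorize \eqref{eq:prefix_update_w} into the affine map $\wb \mapsto \wb(\Ib - \tfrac{\eta}{n}\Xb\Xb^\top) + \tfrac{\eta}{n}\yb\Xb^\top$, read off the stationarity condition $\yb\Xb^\top = \wb^*\Xb\Xb^\top$, and subtract to obtain the error recurrence. You actually add some care the paper omits (existence of $\wb^*$ via the row-space argument, the step-size range for contraction, and the kernel component in the rank-deficient case), but the core argument is identical.
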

That is, Proposition \ref{prop:prefix-icl} holds for the multi-layer prefixLM, so that the same exact $\wb^*$ is also the stationary point of prefixLM, to which it converges in a linear rate.
Furthermore this stationary point is exactly the (optimal) least square solution of the linear regression problem.

\subsection{Convergence of the causalLM ICL}
\label{sec:theory-causallm}
Following the update rule of \eqref{eq:causal_update_w}, we can view a multi-layer causalLM as implicitly maintaining different weight vectors $\wb_j$ for each position $j$.
In what follows, we show that: (a) Each such position $j$ has its own stationary point $\wb_j^*$, which appears to be different from the global optimal point $\wb^*$ of linear regression; (b) even when the number of in-context samples $n$ grows to infinity, convergence to $\wb^*$ is not guaranteed.

Specifically, in Appendix~\ref{sec:proof} we provide a proof for the following proposition: 
\begin{proposition}
\label{prop:causal-icl}
If $\wb_j^{(l)} = \sum_{i=1}^j a_{i,j}^{(l)} \xb_i^\top$ follows the iterative updates of \eqref{eq:causal_update_w}, then 
\begin{align*}
a_{i,j}^{(l)} = a_{i,i}^{(l)} \equiv a_i^{(l)} \;\;\;\; \forall j \ge i,
\end{align*}
and there exist stationary points $\wb_j^* = \sum_{i=1}^j a_i^* \xb_i^\top$ (for $j \in 1, \ldots, n$) with coefficients from $\ab^* = (a^*_1, \ldots, a^*_n)$ that satisfy $\yb = \ab^* \Tb$, where
\begin{align*} 
\Tb = \begin{pmatrix}
\xb_1^\top \xb_1 & \xb_1^\top \xb_2 & \cdots & \xb_1^\top \xb_n \\
0 & \xb_2^\top \xb_2 & \cdots & \xb_2^\top \xb_n \\
\vdots & \vdots & \ddots & \vdots \\
0 & 0 & \cdots & \xb_n^\top \xb_n 
\end{pmatrix}.
\end{align*}
Furthermore, the coefficients $\ab^{(l)}$ converges to the stationary point $\ab^*$ with linear rate of convergence:
\begin{align*}
   \ab^{(l)} - \ab^* = (\ab^{(l-1)} - \ab^*)(\Ib - \frac{\eta}{n} \Tb ).
\end{align*}
\end{proposition}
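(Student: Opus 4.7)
The plan is to proceed in three stages mirroring the three claims of the proposition: (i) establish the ``position-independence'' of the coefficients $a_{i,j}^{(l)}$, (ii) rewrite the update as a clean affine recursion on the row vector $\ab^{(l)}$ and identify its fixed point, and (iii) subtract to obtain the linear-rate recursion.

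First, I would prove by induction on $l$ that $a_{i,j}^{(l)}$ depends only on $i$ whenever $j\ge i$. The base case $l=0$ is immediate from $\wb_j^{(0)}=0$. For the inductive step, assume $\wb_j^{(l-1)}=\sum_{i=1}^j a_i^{(l-1)}\xb_i^\top$ for every $j$. Substituting into \eqref{eq:causal_update_w} gives
\begin{align*}
\wb_j^{(l)} \;=\; \sum_{i=1}^j a_i^{(l-1)}\xb_i^\top + \frac{\eta}{n}\sum_{i=1}^j\bigl(y_i - \wb_i^{(l-1)}\xb_i\bigr)\xb_i^\top,
\end{align*}
so the coefficient of $\xb_i^\top$ in $\wb_j^{(l)}$ is $a_i^{(l-1)} + \tfrac{\eta}{n}(y_i - \wb_i^{(l-1)}\xb_i)$, which does not depend on $j$. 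This justifies writing $a_i^{(l)}$ and $\wb_j^*=\sum_{i\le j}a_i^*\xb_i^\top$.

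Second, I would collect these scalar updates into a row-vector recursion. Since $\wb_i^{(l-1)}\xb_i = \sum_{k\le i}a_k^{(l-1)}\xb_k^\top\xb_i = (\ab^{(l-1)}\Tb)_i$ with $\Tb$ as defined in the statement, the per-coordinate update becomes
\begin{align*}
\ab^{(l)} \;=\; \ab^{(l-1)} + \frac{\eta}{n}\bigl(\yb - \ab^{(l-1)}\Tb\bigr).
\end{align*}
A stationary point $\ab^*$ must then satisfy $\yb=\ab^*\Tb$; since $\Tb$ is upper-triangular with strictly positive diagonal entries $\|\xb_i\|^2$ (assuming $\xb_i\ne 0$), it is invertible and $\ab^*=\yb\Tb^{-1}$ exists and is unique. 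The corresponding $\wb_j^*=\sum_{i\le j}a_i^*\xb_i^\top$ are the claimed stationary points of \eqref{eq:causal_update_w}.

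Third, subtracting the stationary equation $\ab^*=\ab^*+\tfrac{\eta}{n}(\yb-\ab^*\Tb)$ from the recursion above yields the asserted linear-rate identity
\begin{align*}
\ab^{(l)}-\ab^* \;=\; (\ab^{(l-1)}-\ab^*)\bigl(\Ib - \tfrac{\eta}{n}\Tb\bigr).
\end{align*}
Actual contraction follows because $\Ib-\tfrac{\eta}{n}\Tb$ is upper-triangular with eigenvalues $1-\tfrac{\eta}{n}\|\xb_i\|^2$, so for $\eta$ small enough (as in Proposition \ref{prop:prefix-icl}) its spectral radius is strictly below one. The only mildly subtle step is the inductive bookkeeping in stage (i) — one has to notice that the sum in \eqref{eq:causal_update_w} truncates at $j$ but the newly added term for index $i$ only depends on the quantity $\wb_i^{(l-1)}\xb_i$, which itself is determined purely by coordinates $k\le i$; this is what makes the coefficient $a_i^{(l)}$ agnostic to $j$ and ultimately lets the dynamics collapse to the single triangular system $\yb=\ab^*\Tb$.
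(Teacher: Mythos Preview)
Your proposal is correct and follows essentially the same route as the paper: induction to show $a_{i,j}^{(l)}$ is independent of $j$, rewriting the per-coordinate update as the affine recursion $\ab^{(l)}=\ab^{(l-1)}+\tfrac{\eta}{n}(\yb-\ab^{(l-1)}\Tb)$, identifying the fixed point via $\yb=\ab^*\Tb$, and subtracting to obtain the linear-rate identity. Your additional remarks on invertibility of $\Tb$ and the spectral radius of $\Ib-\tfrac{\eta}{n}\Tb$ go slightly beyond what the paper states explicitly, but do not change the argument.
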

This proposition implies that the stationary points $\wb_j^*$ of causalLM-ICL are 
different from $\wb^*$, the least square solution of linear regression. However, a natural question is: if $j$ increases, would $\wb_j^*$ ultimately converge to the optimal solution?

To answer this question, the next proposition shows that the stationary points $\wb_j^*$ follow an online gradient descent algorithm, whose loss and gradient at the $j$-th step is,
\begin{align*}
    L_j(\wb_j) &= \frac{1}{2} (\wb_j \xb_{j+1} - y_{j+1})^2, \\
    \nabla_{\wb_j} L_j(\wb_j) &= (\wb_j \xb_{j+1} - y_{j+1}) \xb_{j+1}^\top.
\end{align*}
\begin{proposition}
\label{prop:causal-ogd}
Assuming that $\wb_j^*$ is the stationary points obtained in Proposition \ref{prop:causal-icl}, then 
\begin{align}
    \wb_{j+1}^* = \wb_j^* - \frac{1}{\|\xb_{j+1}\|_2^2} \nabla_{\wb_j^*} L_j(\wb_j^*).  \label{eq:causal-ogd}
\end{align}
\end{proposition}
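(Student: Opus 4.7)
The plan is to read the stationary-point condition $\yb = \ab^* \Tb$ from Proposition \ref{prop:causal-icl} coordinate-by-coordinate, extracting an interpolation identity for each index, and then to unwind the definition $\wb_j^* = \sum_{i=1}^j a_i^* \xb_i^\top$ incrementally so that $\wb_{j+1}^*$ is expressed as $\wb_j^*$ plus a rank-one correction. Once that correction is solved for, it will immediately take the form of a gradient step on $L_j$ with step size $1/\|\xb_{j+1}\|_2^2$.

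Concretely, I would proceed as follows. First, I would rewrite the $k$-th coordinate of $\yb = \ab^* \Tb$ using the upper-triangular structure of $\Tb$: for every $k \in \{1, \ldots, n\}$,
\begin{align*}
y_k \;=\; \sum_{i=1}^{k} a_i^* \, \xb_i^\top \xb_k \;=\; \Bigl(\sum_{i=1}^{k} a_i^* \xb_i^\top\Bigr) \xb_k \;=\; \wb_k^* \, \xb_k.
\end{align*}
Thus each stationary $\wb_k^*$ interpolates the pair $(\xb_k, y_k)$ exactly. Second, from the definition of $\wb_{j+1}^*$ I would peel off the last term,
\begin{align*}
\wb_{j+1}^* \;=\; \wb_j^* + a_{j+1}^* \, \xb_{j+1}^\top,
\end{align*}
and apply the interpolation identity at index $k=j+1$ to get $y_{j+1} = \wb_j^* \xb_{j+1} + a_{j+1}^* \|\xb_{j+1}\|_2^2$, which solves to $a_{j+1}^* = (y_{j+1} - \wb_j^* \xb_{j+1})/\|\xb_{j+1}\|_2^2$. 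Third, substituting this back and comparing with the stated gradient $\nabla_{\wb_j^*} L_j(\wb_j^*) = (\wb_j^* \xb_{j+1} - y_{j+1})\xb_{j+1}^\top$ yields exactly \eqref{eq:causal-ogd}.

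There is no real technical obstacle; the proof is essentially bookkeeping. The only subtlety worth flagging is the conceptual one: the coefficient $a_{j+1}^*$ is not obtained by running an online update from $\wb_j^*$ — it is determined by a global linear system involving all $\xb_i$ — yet the triangular structure of $\Tb$ conspires so that, reading the system top-down, the unique increment added when passing from $\wb_j^*$ to $\wb_{j+1}^*$ coincides with the least-squares update on the single new sample $(\xb_{j+1}, y_{j+1})$ using step size $1/\|\xb_{j+1}\|_2^2$. I would make sure to emphasize this point when presenting the proof, since it is what justifies calling the causalLM stationary sequence an online gradient descent trajectory with non-decaying, sample-dependent step sizes.
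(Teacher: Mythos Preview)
Your proposal is correct and follows essentially the same route as the paper's own proof: read off the $(j{+}1)$-st coordinate of $\yb = \ab^*\Tb$ to obtain $y_{j+1} = \wb_j^*\xb_{j+1} + a_{j+1}^*\|\xb_{j+1}\|_2^2$, peel $\wb_{j+1}^* = \wb_j^* + a_{j+1}^*\xb_{j+1}^\top$, and substitute. The only cosmetic difference is that you first record the general interpolation identity $y_k = \wb_k^*\xb_k$ for every $k$, whereas the paper expands only the index $j{+}1$ directly; the computations are otherwise identical.
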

The proof of Proposition~\ref{prop:causal-ogd} is provided in Appendix~\ref{sec:proof}. Note that online gradient descent is known to converge to an optimal solution only with a decaying step size $j^{-\nu}$ for $\nu > 0$~\citep{jentzen2020lower}.
Since the step size of \eqref{eq:causal-ogd} does not decay, we conclude that causalLM may not converge to 
$\wb^*$ even with increasing layers and increasing number of in-context examples. More concretely, as for the case of in-context learning, where the number of in-context examples $n$ is limited, convergence to the optimal solution $\wb^*$ cannot be achieved by causalLM-ICL.


\section{Numerical experiments}
\label{sec:experiment}
Our experiments contain three parts. 
\begin{itemize}[leftmargin=15pt]
    \item We first use LSA-transformers on linear regression problems to directly verify our theorems. In Section~\ref{sec:lsa-lr}, we show that despite that the in-context example (training) error of causalLM and prefixLM both decays in linear rates, the query (test) error of causalLM is significantly larger, which indicates its stationary solution is not optimal. 
    \item Secondly, we use ordinary softmax transformers on a few synthetic tasks including linear regression, nonlinear regression and multiclass classification. In Section~\ref{sec:transformer-synthetic}, we show that our theoretical insights generalize to other tasks types (i.e., that ICL prefixLM still outperforms causalLM in all these cases).
    \item Lastly, in Section~\ref{sec:flan-t5x}, we conduct LLM based ICL experiments using T5 \citep{roberts2022t5x}. We also provide additional experimental results on PaLM2~\citep{palm2} as well as large multimodal models (PaLI-X,~\cite{Chen2023PaLIXOS}) in Appendix~\ref{sec:palm2} and~\ref{sec:palix}.
\end{itemize}

\subsection{LSA-transformers on linear regression}
\label{sec:lsa-lr}
In order to directly verify our theorems from Section \ref{sec:theory-ml-icl}, we first study in-context learning on linear regression problem with the LSA transformer of~\eqref{eq:constructed_lsa}. 
Each of the input sequence contains 40 in-context examples and 200 queries, and each query attends to all the in-context examples but does not attend to each other. See Appendix \ref{sec:exp_details} for an illustration. The data input $\xb_i$ of the sequence is sampled from $\Ucal(-1, 1)^{16}$. Each sequence is associated with a single weight vector $\wb$ that is sampled from $\Ncal(0, \Ib)$, and the labels are computed as $y_i = \wb \xb_i$. 
Assuming the prediction of each layer is $\tilde{y}_i^{(l)}$, we evaluate the MSE $\|y_i - \tilde{y}_i^{(l)}\|_2^2$ on both in-context and query examples across different layers $l$. 

The results are plotted in Figure~\ref{fig:icl-convergence} left (for prefixLM) and middle (for causalLM). Our results are averaged over 64 randomly generated sequences. As we can see, although both prefixLM and causalLM has a linear rate of convergence (with respect to the number of layers) on the in-context examples, the query errors of causalLM are stuck  above the $10^{-1}$ level, while the query error of prefixLM decays in the same linear rate as its training error.

Furthermore, in Figure~\ref{fig:icl-convergence} right, we plot the query errors of the stationary points (following Proposition~\ref{prop:causal-icl}, corresponding to the outputs of infinite layers) of causalLM-ICL with increasing number of in-context examples up to 300. Although causalLM-ICL is able to eventually converge to optimal solution when $\mu_x=0$, it takes more than 100 examples to reach below $10^{-2}$. The convergence is even worse as we vary the input distribution $\xb \sim \Ucal(-1, 1)^d + \mu_x$ with increasing $\mu_x \in \cbr{0, 1, 2, 3}$, which demonstrates that causalLM-ICL is not optimal for few-shot ICL.

\begin{figure*}[t]
  \centering
  \includegraphics[width=0.32\linewidth]{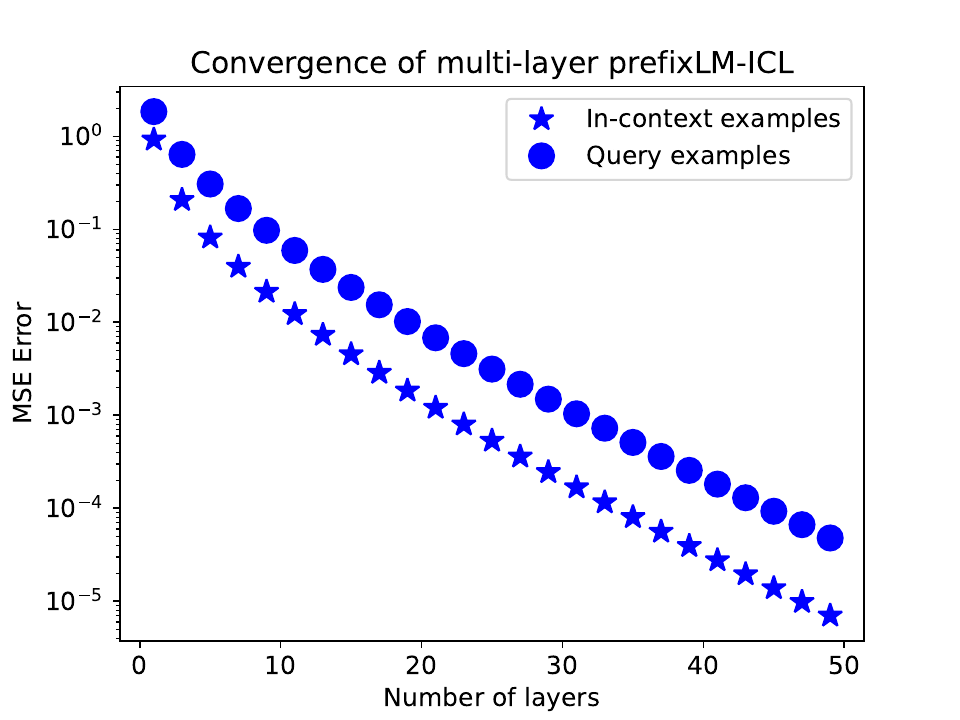}
  \includegraphics[width=0.32\linewidth]{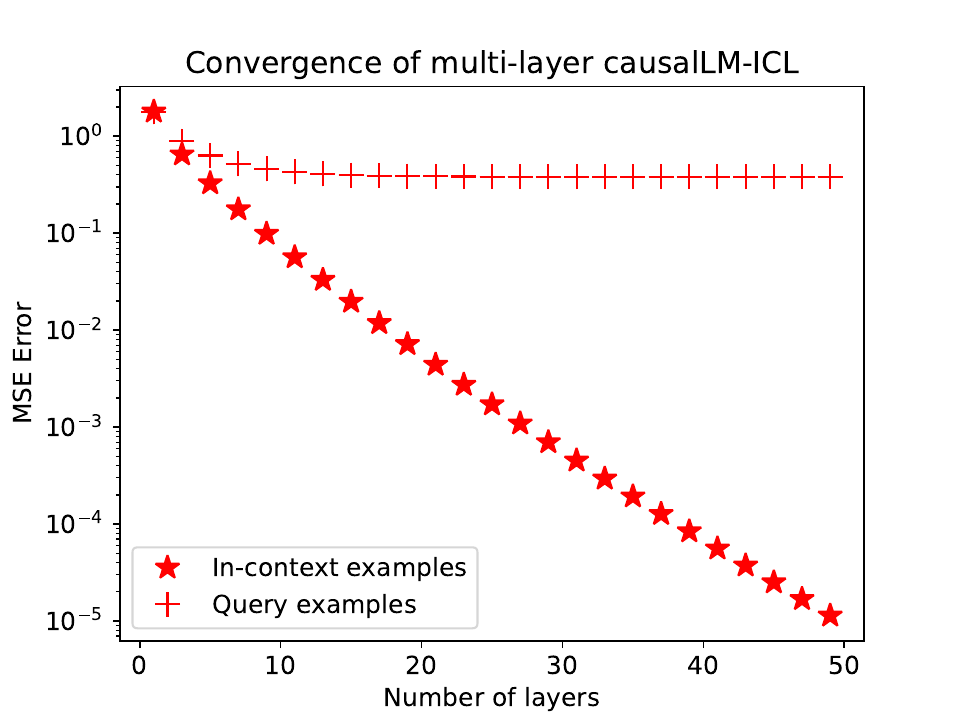}
  \includegraphics[width=0.32\linewidth]{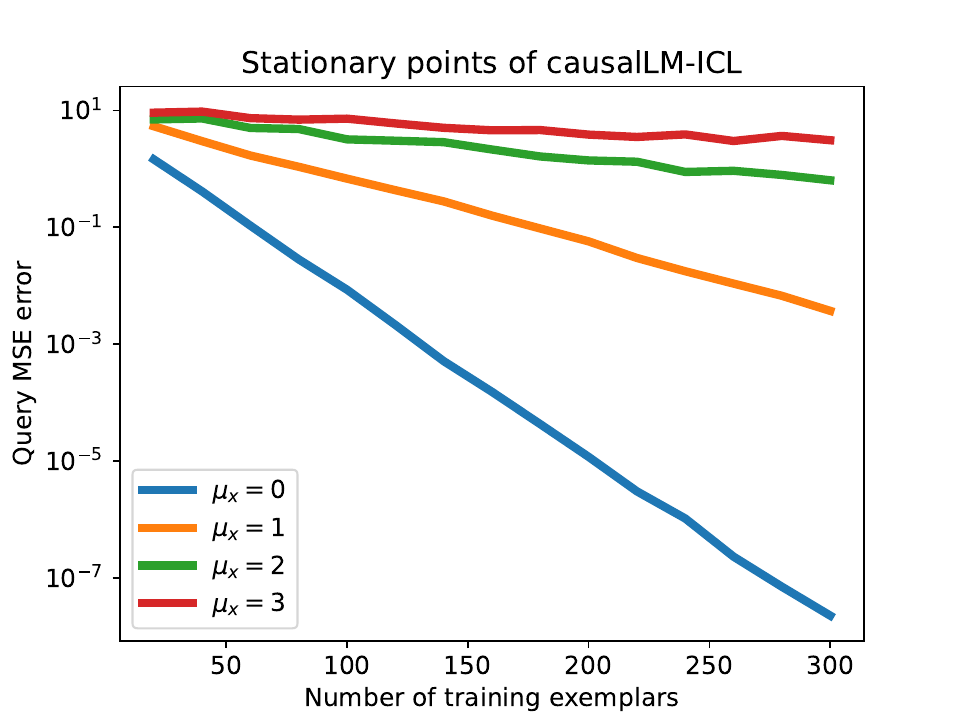}
   \caption{Left/Middle: the MSE on in-context examples and query examples of multi-layer LSA-based prefixLM/causalLM-ICLs with 40 in-context training examples. Right: the query MSE of causalLM-ICL's stationary points (per Proposition~\ref{prop:causal-icl}) using up to 300 in-context examples.}
   \label{fig:icl-convergence}
\end{figure*}

\subsection{Standard transformers on synthetic tasks}
\label{sec:transformer-synthetic}
Previous experiments provided a proof of concept verification of the propositions from Section \ref{sec:theory-ml-icl}. Next we verify if a standard softmax transformer-based prefixLM and causalLM ICL exhibit similar differences on various types of synthetic tasks including linear regression, non-linear regression and multiclass classification.

All three tasks used 16-dim inputs with $\xb \sim \Ucal(-1, 1)^{16}$ and $\wb \sim \Ncal(0, \Ib)$. For non-linear regression, we apply a sigmoid activation on the logit such that $y = \text{sigmoid}(\wb \xb)$; and for multiclass classification, we randomly generate three $\wb_c \sim \Ncal(0, \Ib)$, and assign labels based on $y = \argmax_c \cbr{\wb_c \xb}$.
\begin{table}
    \centering
    \begin{tabular}{c|c c c }
                 & LR  & N-LR & MC \\ \hline
        PrefixLM-SL & 8.6e-3  & 1.5e-4 & {\bf 24.1} \\
        CausalLM-SL & 1.9e-1 & 2.7e-3 & 27.0 \\\hline
        PrefixLM-UL & {\bf 2.5e-3} & {\bf 9.0e-5} & 27.6 \\
        CausalLM-UL & 1.6e-2  & 2.9e-3 & 32.1 \\
    \end{tabular}
    \caption{The test query errors of the unshared-layer (UL) and sharing-layer (SL) transformer-ICLs on linear regression (LR), non-linear regression (NLR), and multiclass classification (MC) tasks. Both regression tasks report mean squared errors; and the MC task reports the classification error.}
    \label{tab:share_vs_noshare}
\end{table}
We trained a few 24-layer transformers containing 128 hidden units and 2 heads. Besides of the comparisons of prefixLM and causalLM, we also compare the transformers with or without sharing layers (SL vs UL). In particular, the sharing-layer transformer can be considered a recurrent system~\citep{dehghani2018universal} where the dynamic is continual along the layers and a stationary point may exist given infinite number of layers, which makes it closer to our constructed LSA. 

The ICL training dataset contains 64,000 training sequences. Each sequence contains 40 in-context examples and 20 queries, where queries are independent of each other similar to Section~\ref{sec:lsa-lr}. The transformers are trained with batch size 64 for 100 epochs.
More details of the hyper-parameters of the experiments are provided in Appendix \ref{sec:exp_details}. 

We evaluate the ICL performance using 64 holdout test sequences and report the test errors on the query examples. The results are summarized in Table~\ref{tab:share_vs_noshare}. We find that both prefixLM-SL and prefixLM-UL significantly outperform their counterparts of causalLM in all cases. As a side note, transformer-SL appears to outperform transformer-UL in the classification tasks, which indicates the overfitting problem of the latter due to over-parameterization. In addition, we also add probes 
at the output of each SL-transformer layer to visualize the test errors of intermediate layers in Figure \ref{fig:transformer-general}. Comparing Figure \ref{fig:transformer-general} and Figure \ref{fig:icl-convergence} (left/middle) reveals some similarities. Although the test query errors of causalLM decay in roughly the same rate as the ones of prefixLM in earlier layers, the decays become much slower in later layers possibly due to the nature of its non-optimal stationary points. These results suggest that the title argument of the paper also holds beyond LSA-based transformers and linear regression.

\begin{figure*}
  \centering
  \includegraphics[width=0.32\linewidth]{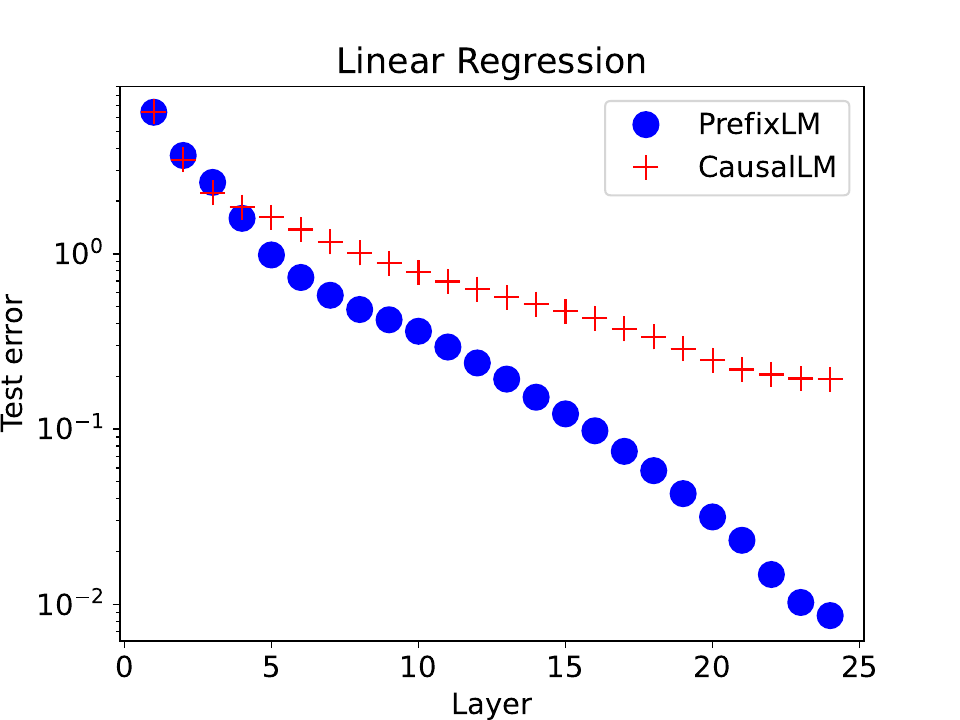}
  \includegraphics[width=0.32\linewidth]{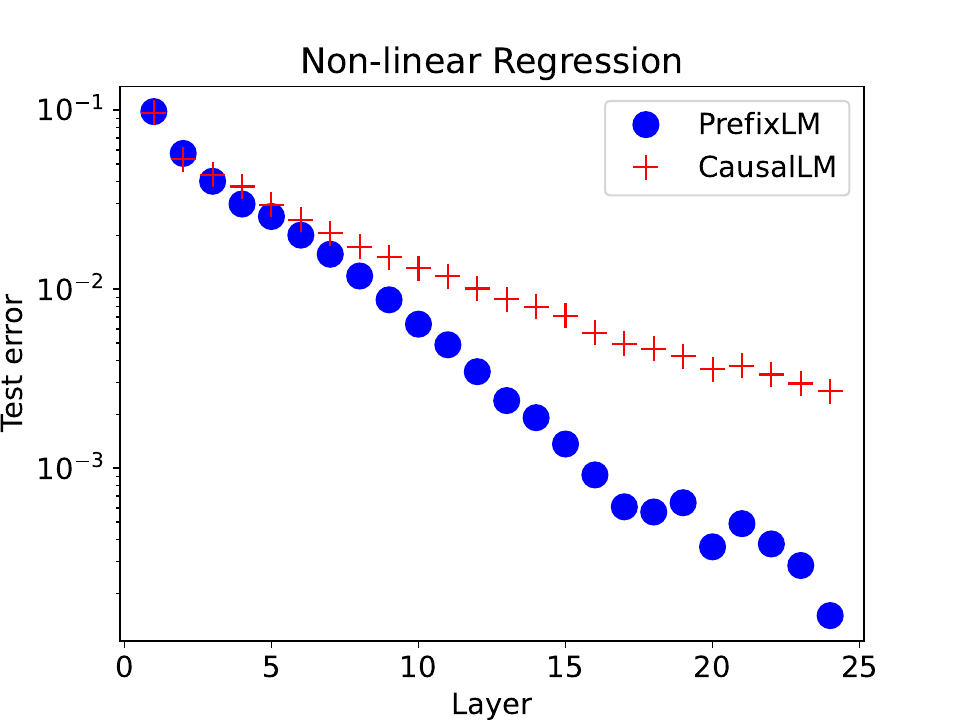}
  \includegraphics[width=0.32\linewidth]{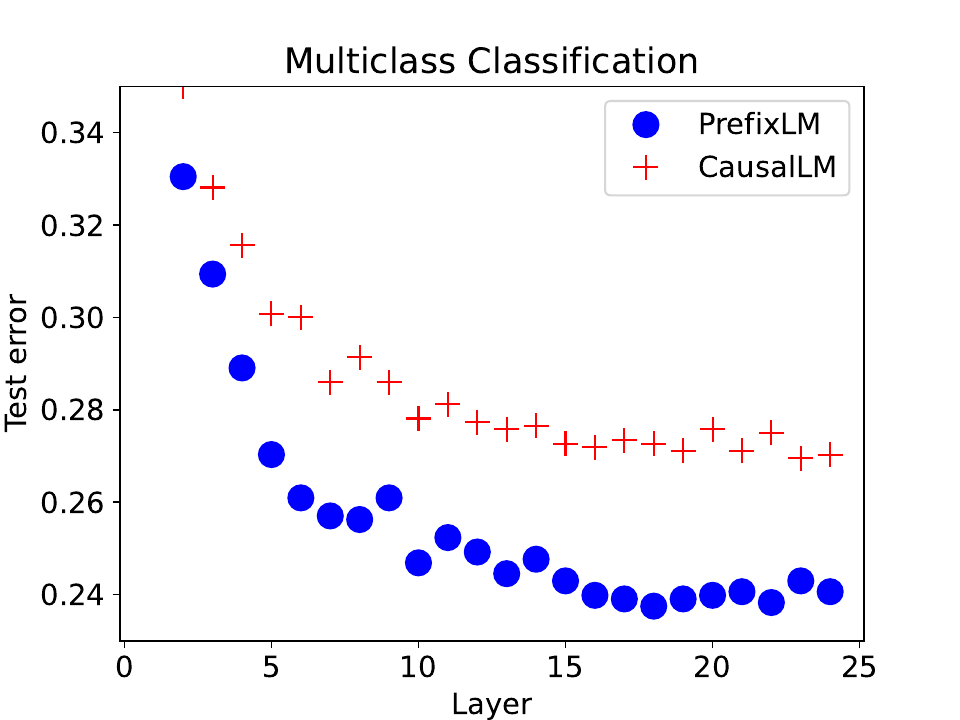}
   \caption{The test query errors of the 24-layer SL-transformers based prefixLM/causalLM-ICLs on linear regression (left), non-linear regression (middle), and multiclass classification (right).}
   \label{fig:transformer-general}
\end{figure*}

\subsection{ICL on large language models}
\label{sec:flan-t5x}
In order to compare the ICL performance of causalLM and prefixLM in a large language model setting, we conduct experiments using the publicly available T5 family of models~\citep{roberts2022t5x}. Note that the existing public T5X~\footnote{\url{https://github.com/google-research/t5x}} checkpoints are all based on EncDec models, which are similar to prefixLM. Thus, it would be unfair and unnatural to compare with causalLM by simply replacing the bidirectional attention of the encoder to the causal attention during the finetuning stage. To make a more reasonable comparison, we reran the pretraining stages of T5 on the C4 corpus~\citep{2020t5} from a random initialization point using a span corruption objective, but in the DecoderOnly setting. 
Moreover, for each size (from Base, Large and XL) of the models, we pretrained two checkpoints, one with prefixLM and the other with causalLM, each for 1M steps using the same T5 pretraining recipe. After pretraining, we use the FLAN recipe~\citep{chung2022scaling} to finetune each checkpoint (40k steps for Base, 20k steps for Large and XL) with its pretrained attention mask and evaluate the ICL capability of the finetuned models on two benchmarks: MMLU~\citep{hendrycks2020measuring} and BBH~\citep{suzgun2022challenging}.

Table~\ref{tab:t5x} shows that for all three sizes of T5X DecoderOnly models, the MMLU and BBH accuracies of prefixLM surpasses that of causalLM consistently and such gap widens as the size of the model becomes larger. This result empirically verifies that our conjecture generalizes to the practical case. 
We supply additional empirical evidence on state-of-the-art models in Appendix~\ref{sec:palm2} and~\ref{sec:palix}.

\begin{table}
    \centering
    \begin{tabular}{c|c c c | c c c}
                 & & MMLU & & & BBH & \\ 
                 & Base & Large & XL & Base & Large & XL \\ \hline
        PrefixLM & {\bf 28.8}  & {\bf 32.0} & {\bf 39.5} & {\bf 27.4}  & {\bf 32.2} & {\bf 35.8} \\
        CausalLM & 28.0  & 26.9 & 30.5 & 24.8 & 29.8 & 32.0 \\
    \end{tabular}
    \caption{The averaged test query accuracies on 5-shot MMLU (57 tasks) and 3-shot BBH (23 tasks) with FLAN-finetuned T5 DecoderOnly prefixLM/causalLM checkpoints.}
    \label{tab:t5x}
\end{table}

\section{Conclusion}
In this paper, we analyzed the convergence properties of two types of widely-used transformer-based language models (causalLM and prefixLM), during in-context learning. Using a simplified LSA attention in a linear regression setting, we proved that both LM types converge to their stationary points in linear rates, but that their stationary points have significantly different properties. In particular, the stationary points of prefixLM coincides with the optimal least square solution; while the ones of causalLM is equivalent to the weights of an online learning system, that is not guaranteed to converge to the optimal solution. Our experiments verified the above theoretical results, and also empirically extend the findings to general transformer on non-linear regression as well as classification tasks. Finally, we compare causalLM and prefixLM on a few large language models and find that prefixLM also consistently wins over causalLM in practical few-shot tasks.

\clearpage

\section*{Acknowledgements}
We want to specially thank Xinhua Zhang for helpful discussions and references about online learning. We also thank Yi Tay for comments regarding the PaLM2 checkpoints.

\bibliography{custom}

\begin{thebibliography}{25}
\providecommand{\natexlab}[1]{#1}
\providecommand{\url}[1]{\texttt{#1}}
\expandafter\ifx\csname urlstyle\endcsname\relax
  \providecommand{\doi}[1]{doi: #1}\else
  \providecommand{\doi}{doi: \begingroup \urlstyle{rm}\Url}\fi

\bibitem[Aky{\"u}rek et~al.(2022)Aky{\"u}rek, Schuurmans, Andreas, Ma, and Zhou]{akyurek2022learning}
Ekin Aky{\"u}rek, Dale Schuurmans, Jacob Andreas, Tengyu Ma, and Denny Zhou.
\newblock What learning algorithm is in-context learning? investigations with linear models.
\newblock \emph{arXiv preprint arXiv:2211.15661}, 2022.

\bibitem[Brown et~al.(2020)Brown, Mann, Ryder, Subbiah, Kaplan, Dhariwal, Neelakantan, Shyam, Sastry, Askell, et~al.]{brown2020language}
Tom Brown, Benjamin Mann, Nick Ryder, Melanie Subbiah, Jared~D Kaplan, Prafulla Dhariwal, Arvind Neelakantan, Pranav Shyam, Girish Sastry, Amanda Askell, et~al.
\newblock Language models are few-shot learners.
\newblock \emph{Advances in neural information processing systems}, 33:\penalty0 1877--1901, 2020.

\bibitem[Chen et~al.(2023)Chen, Djolonga, Padlewski, Mustafa, Changpinyo, Wu, Ruiz, Goodman, Wang, Tay, Shakeri, Dehghani, Salz, Lucic, Tschannen, Nagrani, Hu, Joshi, Pang, Montgomery, Pietrzyk, Ritter, Piergiovanni, Minderer, Pavetic, Waters, Li, Alabdulmohsin, Beyer, Amelot, Lee, Steiner, Li, Keysers, Arnab, Xu, Rong, Kolesnikov, Seyedhosseini, Angelova, Zhai, Houlsby, and Soricut]{Chen2023PaLIXOS}
Xi~Chen, Josip Djolonga, Piotr Padlewski, Basil Mustafa, Soravit Changpinyo, Jialin Wu, Carlos~Riquelme Ruiz, Sebastian Goodman, Xiao Wang, Yi~Tay, Siamak Shakeri, Mostafa Dehghani, Daniel~M. Salz, Mario Lucic, Michael Tschannen, Arsha Nagrani, Hexiang Hu, Mandar Joshi, Bo~Pang, Ceslee Montgomery, Paulina Pietrzyk, Marvin Ritter, A.~J. Piergiovanni, Matthias Minderer, Filip Pavetic, Austin Waters, Gang Li, Ibrahim~M. Alabdulmohsin, Lucas Beyer, Julien Amelot, Kenton Lee, Andreas Steiner, Yang Li, Daniel Keysers, Anurag Arnab, Yuanzhong Xu, Keran Rong, Alexander Kolesnikov, Mojtaba Seyedhosseini, Anelia Angelova, Xiaohua Zhai, Neil Houlsby, and Radu Soricut.
\newblock Pali-x: On scaling up a multilingual vision and language model.
\newblock \emph{ArXiv}, abs/2305.18565, 2023.
\newblock URL \url{https://api.semanticscholar.org/CorpusID:258967670}.

\bibitem[Chowdhery et~al.(2022)Chowdhery, Narang, Devlin, Bosma, Mishra, Roberts, Barham, Chung, Sutton, Gehrmann, et~al.]{chowdhery2022palm}
Aakanksha Chowdhery, Sharan Narang, Jacob Devlin, Maarten Bosma, Gaurav Mishra, Adam Roberts, Paul Barham, Hyung~Won Chung, Charles Sutton, Sebastian Gehrmann, et~al.
\newblock Palm: Scaling language modeling with pathways.
\newblock \emph{arXiv preprint arXiv:2204.02311}, 2022.

\bibitem[Chung et~al.(2022)Chung, Hou, Longpre, Zoph, Tay, Fedus, Li, Wang, Dehghani, Brahma, et~al.]{chung2022scaling}
Hyung~Won Chung, Le~Hou, Shayne Longpre, Barret Zoph, Yi~Tay, William Fedus, Eric Li, Xuezhi Wang, Mostafa Dehghani, Siddhartha Brahma, et~al.
\newblock Scaling instruction-finetuned language models.
\newblock \emph{arXiv preprint arXiv:2210.11416}, 2022.

\bibitem[Dehghani et~al.(2018)Dehghani, Gouws, Vinyals, Uszkoreit, and Kaiser]{dehghani2018universal}
Mostafa Dehghani, Stephan Gouws, Oriol Vinyals, Jakob Uszkoreit, and {\L}ukasz Kaiser.
\newblock Universal transformers.
\newblock \emph{arXiv preprint arXiv:1807.03819}, 2018.

\bibitem[Devlin et~al.(2018)Devlin, Chang, Lee, and Toutanova]{devlin2018bert}
Jacob Devlin, Ming-Wei Chang, Kenton Lee, and Kristina Toutanova.
\newblock Bert: Pre-training of deep bidirectional transformers for language understanding.
\newblock \emph{arXiv preprint arXiv:1810.04805}, 2018.

\bibitem[Dosovitskiy et~al.(2020)Dosovitskiy, Beyer, Kolesnikov, Weissenborn, Zhai, Unterthiner, Dehghani, Minderer, Heigold, Gelly, et~al.]{dosovitskiy2020image}
Alexey Dosovitskiy, Lucas Beyer, Alexander Kolesnikov, Dirk Weissenborn, Xiaohua Zhai, Thomas Unterthiner, Mostafa Dehghani, Matthias Minderer, Georg Heigold, Sylvain Gelly, et~al.
\newblock An image is worth 16x16 words: Transformers for image recognition at scale.
\newblock \emph{arXiv preprint arXiv:2010.11929}, 2020.

\bibitem[Garg et~al.(2022)Garg, Tsipras, Liang, and Valiant]{garg2022can}
Shivam Garg, Dimitris Tsipras, Percy~S Liang, and Gregory Valiant.
\newblock What can transformers learn in-context? a case study of simple function classes.
\newblock \emph{Advances in Neural Information Processing Systems}, 35:\penalty0 30583--30598, 2022.

\bibitem[Google et~al.(2023)Google, Anil, Dai, Firat, Johnson, Lepikhin, Passos, Shakeri, Taropa, Bailey, Chen, Chu, Clark, Shafey, Huang, Meier-Hellstern, Mishra, Moreira, Omernick, Robinson, Ruder, Tay, Xiao, Xu, Zhang, Abrego, Ahn, Austin, Barham, Botha, Bradbury, Brahma, Brooks, Catasta, Cheng, Cherry, Choquette-Choo, Chowdhery, Crepy, Dave, Dehghani, Dev, Devlin, Díaz, Du, Dyer, Feinberg, Feng, Fienber, Freitag, Garcia, Gehrmann, Gonzalez, Gur-Ari, Hand, Hashemi, Hou, Howland, Hu, Hui, Hurwitz, Isard, Ittycheriah, Jagielski, Jia, Kenealy, Krikun, Kudugunta, Lan, Lee, Lee, Li, Li, Li, Li, Li, Lim, Lin, Liu, Liu, Maggioni, Mahendru, Maynez, Misra, Moussalem, Nado, Nham, Ni, Nystrom, Parrish, Pellat, Polacek, Polozov, Pope, Qiao, Reif, Richter, Riley, Ros, Roy, Saeta, Samuel, Shelby, Slone, Smilkov, So, Sohn, Tokumine, Valter, Vasudevan, Vodrahalli, Wang, Wang, Wang, Wang, Wieting, Wu, Xu, Xu, Xue, Yin, Yu, Zhang, Zheng, Zheng, Zhou, Zhou, Petrov, and Wu]{palm2}
Google, Rohan Anil, Andrew~M. Dai, Orhan Firat, Melvin Johnson, Dmitry Lepikhin, Alexandre Passos, Siamak Shakeri, Emanuel Taropa, Paige Bailey, Zhifeng Chen, Eric Chu, Jonathan~H. Clark, Laurent~El Shafey, Yanping Huang, Kathy Meier-Hellstern, Gaurav Mishra, Erica Moreira, Mark Omernick, Kevin Robinson, Sebastian Ruder, Yi~Tay, Kefan Xiao, Yuanzhong Xu, Yujing Zhang, Gustavo~Hernandez Abrego, Junwhan Ahn, Jacob Austin, Paul Barham, Jan Botha, James Bradbury, Siddhartha Brahma, Kevin Brooks, Michele Catasta, Yong Cheng, Colin Cherry, Christopher~A. Choquette-Choo, Aakanksha Chowdhery, Clément Crepy, Shachi Dave, Mostafa Dehghani, Sunipa Dev, Jacob Devlin, Mark Díaz, Nan Du, Ethan Dyer, Vlad Feinberg, Fangxiaoyu Feng, Vlad Fienber, Markus Freitag, Xavier Garcia, Sebastian Gehrmann, Lucas Gonzalez, Guy Gur-Ari, Steven Hand, Hadi Hashemi, Le~Hou, Joshua Howland, Andrea Hu, Jeffrey Hui, Jeremy Hurwitz, Michael Isard, Abe Ittycheriah, Matthew Jagielski, Wenhao Jia, Kathleen Kenealy, Maxim Krikun, Sneha
  Kudugunta, Chang Lan, Katherine Lee, Benjamin Lee, Eric Li, Music Li, Wei Li, YaGuang Li, Jian Li, Hyeontaek Lim, Hanzhao Lin, Zhongtao Liu, Frederick Liu, Marcello Maggioni, Aroma Mahendru, Joshua Maynez, Vedant Misra, Maysam Moussalem, Zachary Nado, John Nham, Eric Ni, Andrew Nystrom, Alicia Parrish, Marie Pellat, Martin Polacek, Alex Polozov, Reiner Pope, Siyuan Qiao, Emily Reif, Bryan Richter, Parker Riley, Alex~Castro Ros, Aurko Roy, Brennan Saeta, Rajkumar Samuel, Renee Shelby, Ambrose Slone, Daniel Smilkov, David~R. So, Daniel Sohn, Simon Tokumine, Dasha Valter, Vijay Vasudevan, Kiran Vodrahalli, Xuezhi Wang, Pidong Wang, Zirui Wang, Tao Wang, John Wieting, Yuhuai Wu, Kelvin Xu, Yunhan Xu, Linting Xue, Pengcheng Yin, Jiahui Yu, Qiao Zhang, Steven Zheng, Ce~Zheng, Weikang Zhou, Denny Zhou, Slav Petrov, and Yonghui Wu.
\newblock Palm 2 technical report, 2023.

\bibitem[Hendrycks et~al.(2020)Hendrycks, Burns, Basart, Zou, Mazeika, Song, and Steinhardt]{hendrycks2020measuring}
Dan Hendrycks, Collin Burns, Steven Basart, Andy Zou, Mantas Mazeika, Dawn Song, and Jacob Steinhardt.
\newblock Measuring massive multitask language understanding.
\newblock \emph{arXiv preprint arXiv:2009.03300}, 2020.

\bibitem[Jentzen \& Von~Wurstemberger(2020)Jentzen and Von~Wurstemberger]{jentzen2020lower}
Arnulf Jentzen and Philippe Von~Wurstemberger.
\newblock Lower error bounds for the stochastic gradient descent optimization algorithm: Sharp convergence rates for slowly and fast decaying learning rates.
\newblock \emph{Journal of Complexity}, 57:\penalty0 101438, 2020.

\bibitem[Karpathy \& Fei-Fei(2015)Karpathy and Fei-Fei]{karpathy2015deep}
Andrej Karpathy and Li~Fei-Fei.
\newblock Deep visual-semantic alignments for generating image descriptions.
\newblock In \emph{Proceedings of the IEEE conference on computer vision and pattern recognition}, pp.\  3128--3137, 2015.

\bibitem[Min et~al.(2022)Min, Lyu, Holtzman, Artetxe, Lewis, Hajishirzi, and Zettlemoyer]{min2022rethinking}
Sewon Min, Xinxi Lyu, Ari Holtzman, Mikel Artetxe, Mike Lewis, Hannaneh Hajishirzi, and Luke Zettlemoyer.
\newblock Rethinking the role of demonstrations: What makes in-context learning work?
\newblock \emph{arXiv preprint arXiv:2202.12837}, 2022.

\bibitem[Raffel et~al.(2020{\natexlab{a}})Raffel, Shazeer, Roberts, Lee, Narang, Matena, Zhou, Li, and Liu]{2020t5}
Colin Raffel, Noam Shazeer, Adam Roberts, Katherine Lee, Sharan Narang, Michael Matena, Yanqi Zhou, Wei Li, and Peter~J. Liu.
\newblock Exploring the limits of transfer learning with a unified text-to-text transformer.
\newblock \emph{Journal of Machine Learning Research}, 21\penalty0 (140):\penalty0 1--67, 2020{\natexlab{a}}.
\newblock URL \url{http://jmlr.org/papers/v21/20-074.html}.

\bibitem[Raffel et~al.(2020{\natexlab{b}})Raffel, Shazeer, Roberts, Lee, Narang, Matena, Zhou, Li, and Liu]{raffel2020exploring}
Colin Raffel, Noam Shazeer, Adam Roberts, Katherine Lee, Sharan Narang, Michael Matena, Yanqi Zhou, Wei Li, and Peter~J Liu.
\newblock Exploring the limits of transfer learning with a unified text-to-text transformer.
\newblock \emph{The Journal of Machine Learning Research}, 21\penalty0 (1):\penalty0 5485--5551, 2020{\natexlab{b}}.

\bibitem[Roberts et~al.(2022)Roberts, Chung, Levskaya, Mishra, Bradbury, Andor, Narang, Lester, Gaffney, Mohiuddin, Hawthorne, Lewkowycz, Salcianu, van Zee, Austin, Goodman, Soares, Hu, Tsvyashchenko, Chowdhery, Bastings, Bulian, Garcia, Ni, Chen, Kenealy, Clark, Lee, Garrette, Lee-Thorp, Raffel, Shazeer, Ritter, Bosma, Passos, Maitin-Shepard, Fiedel, Omernick, Saeta, Sepassi, Spiridonov, Newlan, and Gesmundo]{roberts2022t5x}
Adam Roberts, Hyung~Won Chung, Anselm Levskaya, Gaurav Mishra, James Bradbury, Daniel Andor, Sharan Narang, Brian Lester, Colin Gaffney, Afroz Mohiuddin, Curtis Hawthorne, Aitor Lewkowycz, Alex Salcianu, Marc van Zee, Jacob Austin, Sebastian Goodman, Livio~Baldini Soares, Haitang Hu, Sasha Tsvyashchenko, Aakanksha Chowdhery, Jasmijn Bastings, Jannis Bulian, Xavier Garcia, Jianmo Ni, Andrew Chen, Kathleen Kenealy, Jonathan~H. Clark, Stephan Lee, Dan Garrette, James Lee-Thorp, Colin Raffel, Noam Shazeer, Marvin Ritter, Maarten Bosma, Alexandre Passos, Jeremy Maitin-Shepard, Noah Fiedel, Mark Omernick, Brennan Saeta, Ryan Sepassi, Alexander Spiridonov, Joshua Newlan, and Andrea Gesmundo.
\newblock Scaling up models and data with $\texttt{t5x}$ and $\texttt{seqio}$.
\newblock \emph{arXiv preprint arXiv:2203.17189}, 2022.
\newblock URL \url{https://arxiv.org/abs/2203.17189}.

\bibitem[Sharma et~al.(2018)Sharma, Ding, Goodman, and Soricut]{sharma-etal-2018-conceptual}
Piyush Sharma, Nan Ding, Sebastian Goodman, and Radu Soricut.
\newblock Conceptual captions: A cleaned, hypernymed, image alt-text dataset for automatic image captioning.
\newblock In \emph{Proceedings of the 56th Annual Meeting of the Association for Computational Linguistics (Volume 1: Long Papers)}, Melbourne, Australia, July 2018.

\bibitem[Suzgun et~al.(2022)Suzgun, Scales, Sch{\"a}rli, Gehrmann, Tay, Chung, Chowdhery, Le, Chi, Zhou, et~al.]{suzgun2022challenging}
Mirac Suzgun, Nathan Scales, Nathanael Sch{\"a}rli, Sebastian Gehrmann, Yi~Tay, Hyung~Won Chung, Aakanksha Chowdhery, Quoc~V Le, Ed~H Chi, Denny Zhou, et~al.
\newblock Challenging big-bench tasks and whether chain-of-thought can solve them.
\newblock \emph{arXiv preprint arXiv:2210.09261}, 2022.

\bibitem[Tay et~al.(2022)Tay, Dehghani, Tran, Garcia, Bahri, Schuster, Zheng, Houlsby, and Metzler]{tay2022unifying}
Yi~Tay, Mostafa Dehghani, Vinh~Q Tran, Xavier Garcia, Dara Bahri, Tal Schuster, Huaixiu~Steven Zheng, Neil Houlsby, and Donald Metzler.
\newblock Unifying language learning paradigms.
\newblock \emph{arXiv preprint arXiv:2205.05131}, 2022.

\bibitem[Vaswani et~al.(2017)Vaswani, Shazeer, Parmar, Uszkoreit, Jones, Gomez, Kaiser, and Polosukhin]{NIPS2017_3f5ee243}
Ashish Vaswani, Noam Shazeer, Niki Parmar, Jakob Uszkoreit, Llion Jones, Aidan~N Gomez, \L~ukasz Kaiser, and Illia Polosukhin.
\newblock Attention is all you need.
\newblock In \emph{Advances in Neural Information Processing Systems}, volume~30, 2017.

\bibitem[Von~Oswald et~al.(2023)Von~Oswald, Niklasson, Randazzo, Sacramento, Mordvintsev, Zhmoginov, and Vladymyrov]{von2023transformers}
Johannes Von~Oswald, Eyvind Niklasson, Ettore Randazzo, Jo{\~a}o Sacramento, Alexander Mordvintsev, Andrey Zhmoginov, and Max Vladymyrov.
\newblock Transformers learn in-context by gradient descent.
\newblock In \emph{International Conference on Machine Learning}, pp.\  35151--35174. PMLR, 2023.

\bibitem[Wei et~al.(2023)Wei, Wei, Tay, Tran, Webson, Lu, Chen, Liu, Huang, Zhou, et~al.]{wei2023larger}
Jerry Wei, Jason Wei, Yi~Tay, Dustin Tran, Albert Webson, Yifeng Lu, Xinyun Chen, Hanxiao Liu, Da~Huang, Denny Zhou, et~al.
\newblock Larger language models do in-context learning differently.
\newblock \emph{arXiv preprint arXiv:2303.03846}, 2023.

\bibitem[Xie et~al.(2021)Xie, Raghunathan, Liang, and Ma]{xie2021explanation}
Sang~Michael Xie, Aditi Raghunathan, Percy Liang, and Tengyu Ma.
\newblock An explanation of in-context learning as implicit bayesian inference.
\newblock \emph{arXiv preprint arXiv:2111.02080}, 2021.

\bibitem[Zhang et~al.(2023)Zhang, Frei, and Bartlett]{zhang2023trained}
Ruiqi Zhang, Spencer Frei, and Peter~L Bartlett.
\newblock Trained transformers learn linear models in-context.
\newblock \emph{arXiv preprint arXiv:2306.09927}, 2023.

\end{thebibliography}
\bibliographystyle{iclr2024_conference}

\clearpage
\appendix

{\centering
	{\bf \Large CausalLM is not optimal for in-context learning\par}\vspace{2ex}
 {\bf \large Appendices\par}\vspace{4ex}}

\section{Related Work}
\label{sec:related}
Ever since GPT-3~\citep{brown2020language} exhibited its in-context learning abilities in various language inference and translation tasks, there has been tremendous interest in understanding the mechanics behind In-Context Learning (ICL).
Currently, there are two main camps of thought that try to explain ICL: 
(1) the representation camp, which views ICL behavior as a topic model that extracts relevant memories based on the topic of the context~\citep{xie2021explanation,min2022rethinking} - these works support this view with the findings that in-context learner sometimes behaved similarly even when the label of the training examples were permuted~\citep{min2022rethinking}. 
(2) the algorithmic camp, which holds that LLMs learns to implement a learning algorithm~\citep{garg2022can,akyurek2022learning,von2023transformers} and then run it during ICL - 
these works usually propose a construction of the transformer parameters and show that it can solve certain simple tasks (e.g. linear regression), then empirically verify that transformers track the behavior of the algorithm of interest.

Moreover, recent studies of large-scale data and model~\citep{wei2023larger} discovered that large language models seem to exhibit certain emergent behavior, where, ICL is memory-based on small-to-medium sized models or data, but becomes more algorithm-based on larger model and data. For example, ~\citep{wei2023larger} showed that a large language model is able to respond accordingly to the flipped label in in-context examples, opposing the findings of~\citep{min2022rethinking}.

Since most ICL applications only involve few shots of context examples, it seems reasonable to conjecture that the memory of a \emph{deep} representation and a \emph{shallow} predictor algorithm may co-exist in contributing the in-context learning capabilities. Since the representation learning of large language models have been universally acknowledged, it is more interesting to investigate how transformer learns to in-context learn shallow predictors with few-shot examples.

Focusing on work from the algorithmic camp, we note that~\citep{garg2022can} were the first to suggest using linear regression to study in-context learning. 
The authors empirically found that a 12-layer transformer is able to achieve similar results as a least-square solver on a 20-dim linear regression problem with around 20 in-context examples. Beyond linear regression, they also found that transformers can in-context learn a few other classes of shallow predictors, including two-layer Relu networks.

Probably the first formal theoretical investigation of the linear regression in-context learners is \citep{akyurek2022learning}. They first showed that a transformer layer can approximately conduct four basic operations: mov, mul, div, aff. They then cleverly combined these four operations and showed that
a gradient descent step of linear regression can be implemented with a 4-head 8-layer transformer with $O(d)$ hidden units, where $d$ is the dimension of the inputs $\xb$. 
Despite their novel construction, the result itself provides only a loose upper bound on the model size (or depth) that is required for simulating linear regression within a transformer - for example,~\citep{von2023transformers} reported that a 2 or 5-layer transformer already achieves significantly better results than a single-step gradient descent for linear regression.

Because of the significant discrepancy between the construction of \citep{akyurek2022learning} and the empirical results, the one-layer LSA construction of~\citep{von2023transformers} appears to be more appealing and matches the experimental results better. Moreover, a most recent work by~\citep{zhang2023trained} used gradient flow to prove that by initializing $\wb^{(0)}=0$, such matrix constructions can indeed be learned by an LSA transformer. This is why our paper follows this construction and studies its multi-layer convergence properties with different types of attention (prefixLM vs causalLM).

In terms of the comparison between prefixLM and causalLM, such research work can be traced back as early as~\citep{raffel2020exploring}, where they showed prefixLM outperforms causalLM in varieties of NL tasks. Later, UL-2~\citep{tay2022unifying} proposed to mix prefixLM and span corruption objectives, and found it to be more efficient than the causalLM objective alone. It was also shown in~\citep{chung2022scaling}, that U-PaLM (a UL2-finetuning PaLM) outperforms PaLM (causalLM only) in various ICL tasks. Indeed, for the reasons above, some of the latest models have included prefixLM objectives in the pretraining mix (for example PaLM-2 by~\cite{palm2}). On the other hand, prominent models such as Flamingo as well as the ones in the GPT-family are still based on the causalLM structure, so the comparison between prefixLM and causalLM remains important and relevant. Furthermore, all previous studies were done in an empirical manner, whereas we set out to explain their differences from a theoretical perspective and back the theory with empirical evidence. While we are not the first to follow this path, our work is the first to provide a theoretical justification for the advantage of prefixLM over causalLM in a multi-layer transformer ICL setting by analyzing their theoretical convergence properties.

\section{Proofs}
\label{sec:proof}
In this section, we provide proofs of the propositions introduced in Section~\ref{sec:ml-icl} and Section~\ref{sec:theory-ml-icl}.

\paragraph{Proposition 1}
For a multi-layer LSA satisfying \eqref{eq:constructed_lsa} with $\wb^{(0)} = 0$, if its input $\Zb$ is formatted as~\eqref{eq:icl_input}, then its $l$-th layer output is $\zb_j^{(l)} = (\xb_j^\top, \delta_j^{(l)})^\top$, where $\delta_j^{(l)} = y_j - \wb^{(l)} \xb_j$ and $\wb^{(l)}$ is the weight from the $l$-th step gradient descents as in~\eqref{eq:prefix_update_w}.

\begin{proof}
Plugging in $\Kb$, $\Qb$, $\Pb$ and $\Vb$ of \eqref{eq:constructed_lsa} with $\wb^{(0)}=0$ into \eqref{eq:lsa}, we have
{\small
\begin{align*}
   &\begin{pmatrix}\xb_j \\ \delta_j^{(l)} \end{pmatrix} = \begin{pmatrix}\xb_j \\ \delta_j^{(l-1)} \end{pmatrix} + \frac{\eta}{n} \begin{pmatrix} {\bf 0}_{d \times d} & {\bf 0} \\ 0 & -1\end{pmatrix} \cdot \\
   &\;\;\rbr{\sum_{i=1}^n \begin{pmatrix}\xb_i \\ \delta_i^{(l-1)} \end{pmatrix} (\xb_i^\top, \delta_i^{(l-1)}) \begin{pmatrix} \Ib_{d \times d} & {\bf 0} \\ 0 & 0\end{pmatrix} \begin{pmatrix}\xb_j \\ \delta_j^{(l-1)} \end{pmatrix}} \\
   &\qquad\;\; = \begin{pmatrix}\xb_j \\ \delta_j^{(l-1)} \end{pmatrix} - \frac{\eta}{n} \sum_{i=1}^n \begin{pmatrix}{\bf 0} \\ \delta_i^{(l-1)} \end{pmatrix} \xb_i^\top \xb_j.
\end{align*}
}
It is easy to see that $\zb_j$ never changes its first $d$-dimension corresponding to $\xb_j$. Therefore, we can simplify the above equation and focus only on the last coordinate $\delta_j^{(l)}$, where we have
\begin{align}
\delta_j^{(l)} =& \; \delta_j^{(l-1)} - \frac{\eta}{n} \sum_{i=1}^n \delta_i^{(l-1)} \xb_i^\top \xb_j, \label{eq:prefix_update_delta_app}
\end{align}
with $\delta_j^{(0)} = y_j$. Defining $\tilde{y}_j^{(l)} = y_j - \delta_j^{(l)}$ and rearranging \eqref{eq:prefix_update_delta_app}, we obtain $\tilde{y}_j^{(0)} =0$ and
\begin{align}
\tilde{y}_j^{(l)} =& \; \tilde{y}_j^{(l-1)} + \frac{\eta}{n} \sum_{i=1}^n (y_i - \tilde{y}_i^{(l-1)}) \xb_i^\top \xb_j. \label{eq:prefix_update_y_app}
\end{align}
Next we prove $\tilde{y}_j^{(l)} = \wb^{(l)} \xb_j$ by induction.
Since $\wb^{(0)} = 0$, it is clear that $\tilde{y}_j^{(0)} = \wb^{(0)} \xb_j = 0$ for all $j$.

If $\tilde{y}_j^{(l-1)} = \wb^{(l-1)} \xb_j$ for all $j$, then
\begin{align*}
    \tilde{y}_j^{(l)} =& \; \tilde{y}_j^{(l-1)} + \frac{\eta}{n} \sum_{i=1}^n (y_i - \tilde{y}_i^{(l-1)}) \xb_i^\top \xb_j \\
    = & \wb^{(l-1)} \xb_j + \frac{\eta}{n} \sum_{i=1}^n (y_i - \wb^{(l-1)} \xb_i) \xb_i^\top \xb_j \\
    = & \rbr{\wb^{(l-1)} + \sum_{i=1}^n (y_i - \wb^{(l-1)} \xb_i) \xb_i^\top} \xb_j \\
    = & \wb^{(l)} \xb_j.
\end{align*}
\end{proof}

\paragraph{Proposition 2} 
For a multi-layer causalLM-LSA satisfying \eqref{eq:constructed_lsa} with $\wb^{(0)} = 0$, if its input $\Zb$ is formatted as~\eqref{eq:icl_input}, then its $l$-th layer output is $\zb_j^{(l)} = (\xb_j^\top, \delta_j^{(l)})^\top$, where $\delta_j^{(l)} = y_j - \wb_j^{(l)} \xb_j$ and $\wb_j^{(l)}$ follow~\eqref{eq:causal_update_w}.
\begin{proof}
Plugging in $\Kb$, $\Qb$, $\Pb$ and $\Vb$ of \eqref{eq:constructed_lsa} with $\wb^{(0)}=0$ into \eqref{eq:causal_lsa}, we have
\begin{align*}
    \delta_j^{(l)} =& \; \delta_j^{(l-1)} - \frac{\eta}{n} \sum_{i=1}^j \delta_i^{(l-1)} \xb_i^\top \xb_j\\
\tilde{y}_j^{(l)} =& \; \tilde{y}_j^{(l-1)} + \frac{\eta}{n} \sum_{i=1}^j (y_i - \tilde{y}_i^{(l-1)}) \xb_i^\top \xb_j
\end{align*}
with $\tilde{y}_j^{(l)} = y_j - \delta_j^{(l)}$. 
Next we prove $\tilde{y}_j^{(l)} = \wb_j^{(l)} \xb_j$ by induction.
Since $\wb_j^{(0)} = 0$, it is clear that $\tilde{y}_j^{(0)} = \wb_j^{(0)} \xb_j = 0$ for all $j$.

If $\tilde{y}_j^{(l-1)} = \wb_j^{(l-1)} \xb_j$ for all $j$, then
\begin{align*}
    \tilde{y}_j^{(l)} =& \; \tilde{y}_j^{(l-1)} + \frac{\eta}{j} \sum_{i=1}^n (y_i - \tilde{y}_i^{(l-1)}) \xb_i^\top \xb_j \\
    = & \wb_j^{(l-1)} \xb_j + \frac{\eta}{n} \sum_{i=1}^j (y_i - \wb_i^{(l-1)} \xb_i) \xb_i^\top \xb_j \\
    = & \rbr{\wb_j^{(l-1)} + \sum_{i=1}^n (y_i - \wb_i^{(l-1)} \xb_i) \xb_i^\top} \xb_j \\
    = & \wb_j^{(l)} \xb_j.
\end{align*}
\end{proof}

\paragraph{Proposition 3} 
If $\wb^{(l)}$ follows the iterative updates of \eqref{eq:prefix_update_w}, then there exists a stationary point $\wb^*$ with coefficients satisfying:
\begin{align*}
   \yb \xb^\top = \wb^* \Xb \Xb^\top,
\end{align*}
where $\yb = (y_1, \ldots, y_n)$ and $\Xb = (\xb_1, \ldots, \xb_n)$. Furthermore, the iterative weights $\wb^{(l)}$ converges to the stationary point $\wb^*$ with linear rate of convergence:
\begin{align*}
   \wb^{(l)} - \wb^* = (\wb^{(l-1)} - \wb^*)(\Ib - \frac{\eta}{n} \Xb \Xb^\top ).
\end{align*}

\begin{proof}
From \eqref{eq:prefix_update_w}, we have
\begin{align*}
\wb^{(l)} =& \; \wb^{(l-1)} + \frac{\eta}{n} \underbrace{\sum_{i=1}^n (y_i - \wb^{(l-1)} \xb_i) \xb_i^\top}_{(*)}.
\end{align*}
The stationary point must satisfy $(*) = 0$. Written in vectorized form, we have
\begin{align}
    \yb \Xb^\top = \wb^* \Xb \Xb^\top. \label{eq:prefix_stationary}
\end{align}
Now plugging \eqref{eq:prefix_stationary} back to \eqref{eq:prefix_update_w}, we have
{\small
\begin{align*}
   \wb^{(l)} = \wb^{(l-1)} + \frac{\eta}{n} \rbr{\wb^* \Xb \Xb^\top - \ab \wb^{(l-1)} \Xb \Xb^\top}, 
\end{align*}}
which can be reorganized to
\begin{align*}
   \wb^{(l)} - \wb^* = (\wb^{(l-1)} - \wb^*)(\Ib - \frac{\eta}{n} \Xb\Xb^\top ).
\end{align*}
\end{proof}

\paragraph{Proposition 4}
If $\wb_j^{(l)} = \sum_{i=1}^j a_{i,j}^{(l)} \xb_i^\top$ follows the iterative updates of \eqref{eq:causal_update_w}, then 
\begin{align*}
a_{i,j}^{(l)} = a_{i,i}^{(l)} \equiv a_i^{(l)} \;\;\;\; \forall j \ge i,
\end{align*}
and there exists stationary points $\wb_j^* = \sum_{i=1}^j a_i^* \xb_i^\top$ (for $j \in 1, \ldots, n$) with coefficients from $\ab^* = (a^*_1, \ldots, a^*_n)$ that satisfy $\yb = \ab^* \Tb$, where
\begin{align*} 
\Tb = \begin{pmatrix}
\xb_1^\top \xb_1 & \xb_1^\top \xb_2 & \cdots & \xb_1^\top \xb_n \\
0 & \xb_2^\top \xb_2 & \cdots & \xb_2^\top \xb_n \\
\vdots & \vdots & \ddots & \vdots \\
0 & 0 & \cdots & \xb_n^\top \xb_n 
\end{pmatrix}.
\end{align*}
Furthermore, the coefficients $\ab^{(l)}$ converges to the stationary point $\ab^*$ with linear rate of convergence:
\begin{align*}
   \ab^{(l)} - \ab^* = (\ab^{(l-1)} - \ab^*)(\Ib - \frac{\eta}{n} \Tb ).
\end{align*}

\begin{proof}
First notice that according to \eqref{eq:causal_update_w}, we have
\begin{align*}    
\wb_j^{(l)} =& \; \wb_j^{(l-1)} + \frac{\eta}{n} \sum_{i=1}^j (y_i - \wb_i^{(l-1)} \xb_i) \xb_i^\top \\
=& \sum_{i=1}^j \rbr{a_{i,j}^{(l-1)} + \frac{\eta}{n}(y_i - \wb_i^{(l-1)} \xb_i) } \xb_i^\top
\end{align*}
or
\begin{align*}
a_{i,j}^{(l)} = a_{i,j}^{(l-1)} + \frac{\eta}{n}(y_i - \wb_i^{(l-1)} \xb_i) \;\;\;\; \forall j \ge i.
\end{align*}
Since $a_{i,j}^{(0)} = 0$, and the above update is the same for any $j$ given any $i$, then it is obvious by induction that
\begin{align*}
    a_{i,j}^{(l)} = a_{i,i}^{(l)} \equiv a_i^{(l)} \;\;\;\; \forall j \ge i.
\end{align*}
Therefore, we can simplify $\wb_j^{(l)} = \sum_{i=1}^j a_i^{(l)} \xb_i^\top$.

Now plugging into \eqref{eq:causal_update_w}, we have
{\small
\begin{align*}
&\sum_{i=1}^j a_i^{(l)} \xb_i^\top \\
=& \sum_{i=1}^j a_i^{(l-1)} \xb_i^\top + \frac{\eta}{n} \sum_{i=1}^j (y_i - \sum_{k=1}^i a_k^{(l-1)} \xb_k^\top \xb_i) \xb_i^\top\\
=& \sum_{i=1}^j \rbr{a_i^{(l-1)} + y_i - \frac{\eta}{n} \rbr{\sum_{k=1}^i a_k^{(l-1)} \xb_k^\top \xb_i}} \xb_i^\top,
\end{align*}
}
which is equivalent to
\begin{align}
a_i^{(l)} = a_i^{(l-1)} + \frac{\eta}{n} \underbrace{\rbr{y_i - \sum_{k=1}^i a_k^{(l-1)} \xb_k^\top \xb_i}}_{(*)}. \label{eq:causal_update_a}
\end{align}
The stationary points satisfy $(*) = 0$, which gives
\begin{align*}
    y_1 =& a_1^* \xb_1^\top \xb_1 \\
    y_2 =& a_1^* \xb_1^\top \xb_2 + a_2^* \xb_2^\top \xb_2 \\
    &\ldots \\
    y_n =& a_1^* \xb_1^\top \xb_n + \ldots + a_n^* \xb_n^\top \xb_n,
\end{align*}
or in the vectorized form $\yb = \ab^* \Tb$, where
\begin{align*} 
\Tb = \begin{pmatrix}
\xb_1^\top \xb_1 & \xb_1^\top \xb_2 & \cdots & \xb_1^\top \xb_n \\
0 & \xb_2^\top \xb_2 & \cdots & \xb_2^\top \xb_n \\
\vdots & \vdots & \ddots & \vdots \\
0 & 0 & \cdots & \xb_n^\top \xb_n 
\end{pmatrix}.
\end{align*}

Now plugging in $\yb = \ab^* \Tb$ back to \eqref{eq:causal_update_a} and vectorize it, yields
\begin{align*}
    \ab^{(l)} = \ab^{(l-1)} + \frac{\eta}{n} \rbr{\ab^{*} \Tb - \ab \Tb}, 
\end{align*}
which can be reorganized to 
\begin{align*}
   \ab^{(l)} - \ab^* = (\ab^{(l-1)} - \ab^*)(\Ib - \frac{\eta}{n} \Tb ).
\end{align*}
\end{proof}

\paragraph{Proposition 5}
Assuming that $\wb_j^*$ is the stationary points obtained in Proposition \ref{prop:causal-icl}, then 
\begin{align*}
    \wb_{j+1}^* = \wb_j^* - \frac{1}{\|\xb_{j+1}\|_2^2} \nabla_{\wb_j^*} L_j(\wb_j^*).
\end{align*}
\begin{proof}
Recall the online learning system with a sequence of data-label pairs $(\xb_j, y_j)$ has the following online loss and its gradient at the $j$-th step,
\begin{align*}
    L_j(\wb_j) &= \frac{1}{2} (\wb_j \xb_{j+1} - y_{j+1})^2, \\
    \nabla_{\wb_j} L_j(\wb_j) &= (\wb_j \xb_{j+1} - y_{j+1}) \xb_{j+1}^\top.
\end{align*}
According to Proposition \ref{prop:causal-icl}, we have $\yb = \ab^* \Tb$, which gives
\begin{align}
    y_{j+1} =& a_1^* \xb_1^\top \xb_{j+1} + \ldots + a_j^* \xb_j^\top \xb_{j+1} \nonumber\\
   &\; + a_{j+1}^* \xb_{j+1}^\top \xb_{j+1} \nonumber\\
   =& \wb_j^* \xb_{j+1} + a_{j+1}^* \xb_{j+1}^\top \xb_{j+1} \label{eq:proof4-inter1}
\end{align}
where the last equation is due to $\wb_j^* = \sum_{i=1}^j a_i^* \xb_i^\top$.

Since $\wb_j^* = \sum_{i=1}^j a_i^* \xb_i^\top$, we have
\begin{align*}
    \wb_{j+1}^* =& \wb_j^* + a_{j+1}^* \xb_{j+1}^\top \\
    =& \wb_j^* + \frac{1}{\|\xb_{j+1}\|_2^2} \rbr{a_{j+1}^* \xb_{j+1}^{\top} \xb_{j+1}}\xb_{j+1}^\top\\
    =& \wb_j^* - \frac{1}{\|\xb_{j+1}\|_2^2} (\wb_j^* \xb_{j+1} - y_{j+1}) \xb_{j+1}^\top\\
    =& \wb_j^* - \frac{1}{\|\xb_{j+1}\|_2^2} \nabla_{\wb_j^*} L_j(\wb_j^*)
\end{align*}
where the third equation is because of \eqref{eq:proof4-inter1}.
\end{proof}

\section{Multi-layer LSA construction with non-zero w(0) }
\label{sec:general_w0}
In this section, we introduce the proposition that connects a multi-layer LSA following the construction of \eqref{eq:constructed_lsa} but with non-zero $\wb^{(0)}$ and the multi-step gradient descents of linear regression.
\begin{proposition}
For a multi-layer LSA satisfying the construction \eqref{eq:constructed_lsa}, if its input $\Zb$ is formatted as~\eqref{eq:icl_input}, then its $l$-th layer output is $\zb_j^{(l)} = (\xb_j^\top, \delta_j^{(l)})^\top$, where $\delta_j^{(l)} = y_j - (\wb^{(l)} - \wb^{(0)}) \xb_j$ and $\wb^{(l)}$ is the $l$-th updated weight from the gradient descents update rule in~\eqref{eq:prefix_update_w}.
\end{proposition}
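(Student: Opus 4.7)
The plan is to mimic the induction used in the proof of Proposition~\ref{prop:muicl}, now tracking the extra $\wb^{(0)}$ terms that appear because the lower-left block of $\Vb$ is no longer zero. First I would substitute the matrices from \eqref{eq:constructed_lsa} into the LSA update \eqref{eq:lsa} with $\wb^{(0)}$ arbitrary. The key/query matrices still project onto the $\xb$-coordinates, so the attention scores remain $\xb_i^\top \xb_j$ and the first $d$ coordinates of each $\zb_j^{(l)}$ still equal $\xb_j$. The only change is that $\Vb\,\zb_i^{(l-1)}$ now equals $(\mathbf{0}, \wb^{(0)} \xb_i - \delta_i^{(l-1)})^\top$, giving the last-coordinate recursion
$$\delta_j^{(l)} = \delta_j^{(l-1)} + \frac{\eta}{n} \sum_{i=1}^n \bigl(\wb^{(0)} \xb_i - \delta_i^{(l-1)}\bigr) \xb_i^\top \xb_j.$$

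The crux is to spot a cancellation. Under the inductive hypothesis $\delta_i^{(l-1)} = y_i - (\wb^{(l-1)} - \wb^{(0)}) \xb_i$, the quantity $\wb^{(0)} \xb_i - \delta_i^{(l-1)}$ collapses to $-(y_i - \wb^{(l-1)} \xb_i)$. In other words, the $+\wb^{(0)}$ shift embedded in $\Vb$ is exactly absorbed by the $-\wb^{(0)}$ shift in the modified definition of $\delta_j^{(l)}$, restoring the same true-residual recursion that drives gradient descent in \eqref{eq:prefix_update_w}. Substituting back gives $\delta_j^{(l)} = \delta_j^{(l-1)} - (\wb^{(l)} - \wb^{(l-1)}) \xb_j$, and combining this with the inductive hypothesis yields $\delta_j^{(l)} = y_j - (\wb^{(l)} - \wb^{(0)}) \xb_j$, closing the induction. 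The base case $l=0$ is immediate since $\delta_j^{(0)} = y_j = y_j - (\wb^{(0)} - \wb^{(0)}) \xb_j$.

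The proof is almost entirely bookkeeping, and the only place one can trip up is in confusing the \emph{fixed} $\wb^{(0)}$ baked into $\Vb$ with the iterates $\wb^{(l)}$ defined by \eqref{eq:prefix_update_w} starting from that same initialization. Once the cancellation above is noticed, the argument reduces mechanically to the proof of Proposition~\ref{prop:muicl}, and no new estimates or techniques are required. A pleasant byproduct is that, because the underlying GD recursion for $\wb^{(l)}$ is unaffected by the choice of $\wb^{(0)}$, the linear-rate convergence statement in Proposition~\ref{prop:prefix-icl} continues to apply verbatim — the generalization comes at no cost beyond a careful substitution and relabeling of $\delta_j^{(l)}$.
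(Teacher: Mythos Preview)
Your proposal is correct and follows essentially the same approach as the paper. The only cosmetic difference is that the paper introduces the auxiliary variable $\tilde{y}_j^{(l)} = y_j - \delta_j^{(l)} + \wb^{(0)}\xb_j$ and inducts on $\tilde{y}_j^{(l)} = \wb^{(l)}\xb_j$, whereas you induct directly on the claimed formula for $\delta_j^{(l)}$; these are the same statement rearranged, and the underlying cancellation you identify is exactly the one driving the paper's argument.
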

\begin{proof}
Plugging in $\Kb$, $\Qb$, $\Pb$ and $\Vb$ of \eqref{eq:constructed_lsa} into \eqref{eq:lsa}, we have
\begin{align}
\delta_j^{(l)} =& \; \delta_j^{(l-1)} - \frac{\eta}{n} \sum_{i=1}^n \rbr{\delta_i^{(l-1)} - \wb^{(0)}\xb_i }\xb_i^\top \xb_j, \label{eq:prefix_update_delta_general_w0}
\end{align}
with $\delta_j^{(0)} = y_j$. Defining $\tilde{y}_j^{(l)} = y_j - \delta_j^{(l)} + \wb^{(0)}\xb_j$ and rearranging the \eqref{eq:prefix_update_delta_general_w0}, we obtain $\tilde{y}_j^{(0)} =0$ and
\begin{align*}
\tilde{y}_j^{(l)} =& \; \tilde{y}_j^{(l-1)} + \frac{\eta}{n} \sum_{i=1}^n (y_i - \tilde{y}_i^{(l-1)}) \xb_i^\top \xb_j.
\end{align*}
Then it is easy to prove $\tilde{y}_j^{(l)} = \wb^{(l)} \xb_j$ by induction, similar to the proof of Proposition~\ref{prop:muicl}.
\end{proof}

\section{CausalLM with attention-length-based coefficients}
\label{sec:causallm2}
Since there are $j$ terms in the summation of \eqref{eq:causal_update_w}, another reasonable update for causalLM would be
\begin{align}    
\wb_j^{(l)} =& \; \wb_j^{(l-1)} + \frac{\eta}{j} \sum_{i=1}^j (y_i - \wb_i^{(l-1)} \xb_i) \xb_i^\top, \label{eq:causal_update_w_2}
\end{align}
which we call causalLM2. For causalLM2, we have the following proposition. 
\begin{proposition}
\label{prop:causal-icl-2}
If $\wb_j^{(l)} = \sum_{i=1}^j a_{i,j}^{(l)} \xb_i^\top$ follows the iterative updates of \eqref{eq:causal_update_w_2}, then 
\begin{align*}
a_{i,j}^{(l)} \equiv \frac{1}{j}a_i^{(l)} \;\;\;\; \forall j \ge i,
\end{align*}
and there exists stationary points $\wb_j^* = \frac{1}{j} \sum_{i=1}^j a_i^* \xb_i^\top$ (for $j \in 1, \ldots, n$) with coefficients from $\ab^* = (a^*_1, \ldots, a^*_n)$ that satisfy $\yb = \ab^* \Sbb$, where
\begin{align*} 
\Sbb = \begin{pmatrix}
\xb_1^\top \xb_1 & \frac{1}{2}\xb_1^\top \xb_2 & \cdots & \frac{1}{n}\xb_1^\top \xb_n \\
0 & \frac{1}{2}\xb_2^\top \xb_2 & \cdots & \frac{1}{n}\xb_2^\top \xb_n \\
\vdots & \vdots & \ddots & \vdots \\
0 & 0 & \cdots & \frac{1}{n}\xb_n^\top \xb_n 
\end{pmatrix}.
\end{align*}
Furthermore, the coefficients $\ab^{(l)}$ converges to the stationary point $\ab^*$ with the following rate of convergence:
\begin{align*}
   \ab^{(l)} - \ab^* = (\ab^{(l-1)} - \ab^*)(\Ib - \eta \Sbb ).
\end{align*}
\end{proposition}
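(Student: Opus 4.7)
}
The plan is to mirror the structure of the proof of Proposition~\ref{prop:causal-icl}, but carefully track how the $1/j$ factor in \eqref{eq:causal_update_w_2} propagates through the coefficient recursion. Writing $\wb_j^{(l)} = \sum_{i=1}^j a_{i,j}^{(l)} \xb_i^\top$ and equating the coefficients of $\xb_i^\top$ in \eqref{eq:causal_update_w_2} for each fixed $i \le j$, I expect to obtain the scalar update
\begin{align*}
a_{i,j}^{(l)} \;=\; a_{i,j}^{(l-1)} + \frac{\eta}{j}\bigl(y_i - \wb_i^{(l-1)} \xb_i\bigr),
\end{align*}
where crucially the right-hand side depends on $j$ only through the explicit $1/j$ prefactor (the bracketed residual is indexed by $i$, not $j$). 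Since $a_{i,j}^{(0)} = 0$ for every $j \ge i$, I would define $a_i^{(l)} \defeq j \cdot a_{i,j}^{(l)}$ and verify by induction on $l$ that this quantity is independent of $j$; equivalently, $a_{i,j}^{(l)} = \tfrac{1}{j} a_i^{(l)}$ for all $j \ge i$. This immediately gives $\wb_j^* = \tfrac{1}{j}\sum_{i=1}^j a_i^* \xb_i^\top$ as claimed.

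Next, I would substitute this scaling into the update to obtain a closed recursion purely in $\ab^{(l)}$. Since $\wb_i^{(l-1)}\xb_i = \tfrac{1}{i}\sum_{k=1}^i a_k^{(l-1)} \xb_k^\top \xb_i$, the scalar update becomes
\begin{align*}
a_i^{(l)} = a_i^{(l-1)} + \eta\Bigl(y_i - \sum_{k=1}^i \tfrac{1}{i} a_k^{(l-1)}\, \xb_k^\top \xb_i\Bigr).
\end{align*}
Setting the parenthesized residual to zero for every $i$ yields the system $y_i = \sum_{k=1}^i \tfrac{1}{i} a_k^* \xb_k^\top \xb_i$, which in vector form is precisely $\yb = \ab^* \Sbb$ with $\Sbb_{k,i} = \tfrac{1}{i} \xb_k^\top \xb_i$ for $k \le i$ and zero otherwise. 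Subtracting the stationary equation from the scalar update and vectorizing then produces
\begin{align*}
\ab^{(l)} - \ab^* = (\ab^{(l-1)} - \ab^*)(\Ib - \eta \Sbb),
\end{align*}
establishing linear convergence with rate matrix $\Ib - \eta \Sbb$. Note the absence of the $1/n$ factor compared to Proposition~\ref{prop:causal-icl}, because the $n$ has effectively been absorbed into the columnwise reweighting in $\Sbb$.

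The main subtlety, which I would spell out carefully, is the first step: verifying that the ansatz $a_{i,j}^{(l)} = \tfrac{1}{j} a_i^{(l)}$ is self-consistent across all $j \ge i$. The point is that although the update to $a_{i,j}^{(l)}$ is scaled by $\tfrac{1}{j}$, the residual driving the update $(y_i - \wb_i^{(l-1)}\xb_i)$ is indexed by $i$ alone and hence identical for every $j \ge i$; combined with the zero initialization this forces the $1/j$ dependence to be the only $j$-dependence for all iterations. Once that invariant is nailed down, the rest of the argument is essentially a rerun of the Proposition~\ref{prop:causal-icl} derivation with $\Tb$ replaced by $\Sbb$ and the global step size $\eta/n$ replaced by $\eta$.
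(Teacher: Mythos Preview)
Your proposal is correct and follows precisely the approach the paper implicitly intends: the paper does not spell out a separate proof for Proposition~\ref{prop:causal-icl-2}, but it is the direct analogue of the proof of Proposition~\ref{prop:causal-icl} with the $\eta/n$ scaling replaced by $\eta/j$, and your derivation tracks that substitution exactly as required. The key invariant you identify---that $j\cdot a_{i,j}^{(l)}$ is independent of $j$ because the residual $(y_i - \wb_i^{(l-1)}\xb_i)$ depends only on $i$---is the one new ingredient beyond Proposition~\ref{prop:causal-icl}, and your inductive argument for it is sound.
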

The condition number $\kappa(\Sbb)$ is about $n/2$ greater than the one of $\kappa(\Tb)$, which makes causalLM2 converge much slower than causalLM. 

One can also prove that the stationary point of causalLM2 corresponds to the following online system with online loss and gradient at the $j$-th step,
\begin{align*}
    L_j(\tilde{\wb}_j) &= \frac{1}{2} (\tilde{\wb}_j \xb_{j+1} - y_{j+1})^2, \\
    \nabla_{\tilde{\wb}_j} L_j(\tilde{\wb}_j) &= (\tilde{\wb}_j \xb_{j+1} - y_{j+1}) \xb_{j+1}^\top,
\end{align*}
where $\tilde{\wb} = \frac{j}{j+1} \wb$.
\begin{proposition}
\label{prop:causal-ogd-2}
Assuming that $\wb_j^*$ is the stationary points obtained in Proposition \ref{prop:causal-icl}, then 
\begin{align*}
    \wb_{j+1}^* = \tilde{\wb}_j^* - \frac{1}{\|\xb_{j+1}\|_2^2} \nabla_{\tilde{\wb}_j^*} L_j(\tilde{\wb}_j^*).
\end{align*}
\end{proposition}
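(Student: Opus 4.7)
The plan is to leverage the explicit stationary-point characterization from Proposition~\ref{prop:causal-icl}, namely $\wb_j^* = \sum_{i=1}^j a_i^* \xb_i^\top$ and $\yb = \ab^* \Tb$, and then show that the increment from $\wb_j^*$ to $\wb_{j+1}^*$ coincides exactly with a scaled online gradient step. The argument is essentially a two-line algebraic identification, so I do not expect any significant obstacle.

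First, I would write out the recursive relation between successive stationary weights directly from the definition:
\begin{align*}
\wb_{j+1}^* = \sum_{i=1}^{j+1} a_i^* \xb_i^\top = \wb_j^* + a_{j+1}^* \xb_{j+1}^\top.
\end{align*}
Hence the whole task reduces to identifying the scalar $a_{j+1}^*$ in terms of quantities involving $\wb_j^*$ and the data point $(\xb_{j+1}, y_{j+1})$.

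Next, I would read off the $(j+1)$-th coordinate of the linear system $\yb = \ab^* \Tb$. Because $\Tb$ is upper triangular with entries $\xb_i^\top \xb_k$ for $i \le k$, this row gives
\begin{align*}
y_{j+1} = \sum_{i=1}^{j+1} a_i^* \xb_i^\top \xb_{j+1} = \wb_j^* \xb_{j+1} + a_{j+1}^* \|\xb_{j+1}\|_2^2,
\end{align*}
so solving for $a_{j+1}^*$ yields $a_{j+1}^* = (y_{j+1} - \wb_j^* \xb_{j+1}) / \|\xb_{j+1}\|_2^2$.

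Finally, substituting this back into the recursion produces
\begin{align*}
\wb_{j+1}^* = \wb_j^* - \frac{1}{\|\xb_{j+1}\|_2^2}\,(\wb_j^* \xb_{j+1} - y_{j+1})\, \xb_{j+1}^\top,
\end{align*}
and recognizing the rightmost factor as $\nabla_{\wb_j^*} L_j(\wb_j^*)$ closes the proof. The only thing to be mildly careful about is the implicit assumption $\xb_{j+1} \neq 0$ so that the division is well-defined, which is a benign nondegeneracy condition already used implicitly in the invertibility of $\Tb$ in Proposition~\ref{prop:causal-icl}.
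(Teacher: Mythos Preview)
Your argument is a correct proof of Proposition~\ref{prop:causal-ogd}, but not of Proposition~\ref{prop:causal-ogd-2}. The latter lives in the causalLM2 setting of Appendix~\ref{sec:causallm2}, where the relevant stationary points are those of Proposition~\ref{prop:causal-icl-2} (the reference to Proposition~\ref{prop:causal-icl} in the statement is a typo). There the characterization is $\wb_j^* = \frac{1}{j}\sum_{i=1}^j a_i^* \xb_i^\top$ and $\yb = \ab^* \Sbb$, with the $(j{+}1)$-th column of $\Sbb$ carrying an extra $1/(j{+}1)$ factor. Consequently the recursion between successive stationary weights is not $\wb_{j+1}^* = \wb_j^* + a_{j+1}^* \xb_{j+1}^\top$ but rather
\[
\wb_{j+1}^* \;=\; \frac{1}{j+1}\Bigl(j\,\wb_j^* + a_{j+1}^*\,\xb_{j+1}^\top\Bigr) \;=\; \tilde{\wb}_j^* + \frac{a_{j+1}^*}{j+1}\,\xb_{j+1}^\top,
\]
which is exactly where the rescaled weight $\tilde{\wb}_j^* = \frac{j}{j+1}\wb_j^*$ enters. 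Reading off the $(j{+}1)$-th coordinate of $\yb = \ab^* \Sbb$ then gives $y_{j+1} = \tilde{\wb}_j^* \xb_{j+1} + \frac{a_{j+1}^*}{j+1}\|\xb_{j+1}\|_2^2$, and from here your substitution argument goes through verbatim with $\tilde{\wb}_j^*$ in place of $\wb_j^*$.

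The telltale sign that something was off is that your final displayed line yields $\wb_{j+1}^* = \wb_j^* - \frac{1}{\|\xb_{j+1}\|_2^2}\nabla_{\wb_j^*} L_j(\wb_j^*)$, whereas the statement to be proved has $\tilde{\wb}_j^*$ on the right-hand side; in your derivation the tilde never appears at all.
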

Since the step does not have $j^{-\nu}$ ($\nu > 0$) decay, such online system is not guaranteed to converge, therefore suffers the same problem as the original causalLM in Section \ref{sec:causallm}.

In Figure~\ref{fig:causallm2-convergence}, we plot the query MSE error of the stationary points of causalLM2-ICL with increasing number of in-context examples. We can see that the online system corresponding to causalLM2-ICL converges even slower than the ones of causalLM-ICL in Figure~\ref{fig:icl-convergence} right.
\begin{figure}[t]
  \centering
  \includegraphics[width=0.55\linewidth]{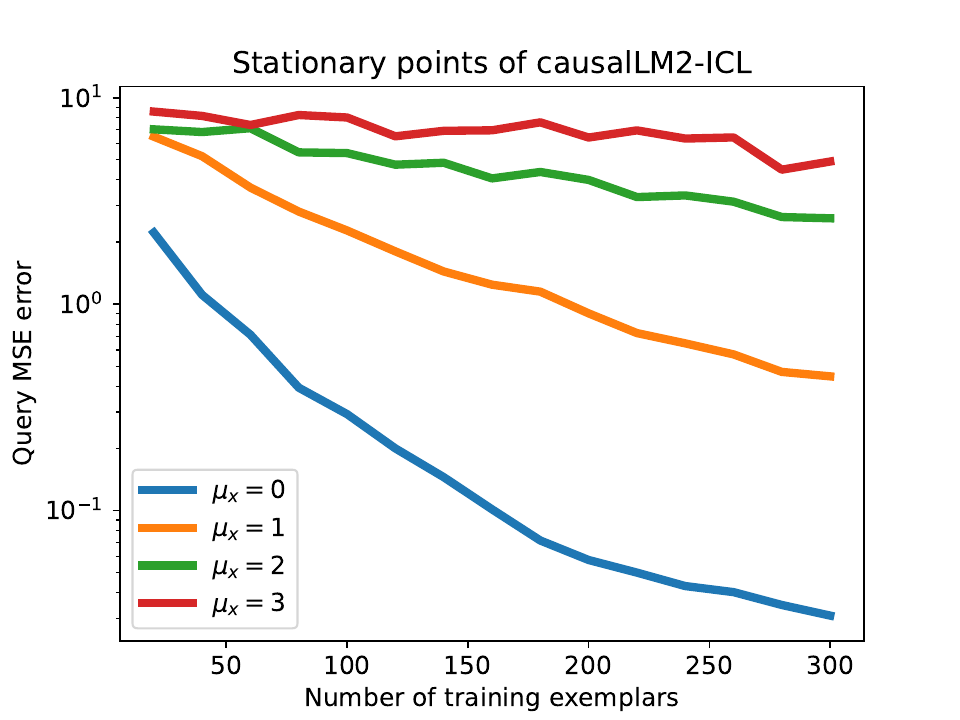}
   \caption{The test error on the stationary point of the causalLM2-ICL with up to 300 in-context examples.}
   \label{fig:causallm2-convergence}
\end{figure}

\section{Additional experimental details and results}
\label{sec:exp_details}
\subsection{Experiment settings for Section~\ref{sec:lsa-lr}}
In order to directly verify the theorem, we used the constructed LSA-based transformer, with $\Kb = \Qb = \begin{pmatrix} \Ib_{d \times d} & {\bf 0} \\ 0 & 0\end{pmatrix}$, $\Vb = \begin{pmatrix} {\bf 0}_{d \times d} & {\bf 0} \\ 0 & -1\end{pmatrix}$ and $\Pb = \frac{\eta}{n} \Ib$. Although not a trained transformer, it was recently proved in \citep{zhang2023trained} that a randomly initialized LSA-transformer does converge to such a construction. In addition, we did an ablation test of $\eta = \cbr{0.1, 0.2, 0.4, 0.8, 1.6, 3.2}$ and chose $\eta=1.6$ as it converges the fastest without any divergence problems. 

We randomly generated 64 sequences for ICL evaluation. For each sequence, we put the first 40 examples as the in-context examples and the last 200 examples as the query examples. The queries are independent of each other without attention. See Figure~\ref{fig:illu-attention} for an illustration of the transformer attention mask. Such multi-query design is for training efficiency purpose only and is equivalent to 200 sequences with the same $\wb$ and input examples $\xb_i$, but different one query per sequence. 
\begin{figure}[t]
  \centering
  \includegraphics[width=.55\linewidth]{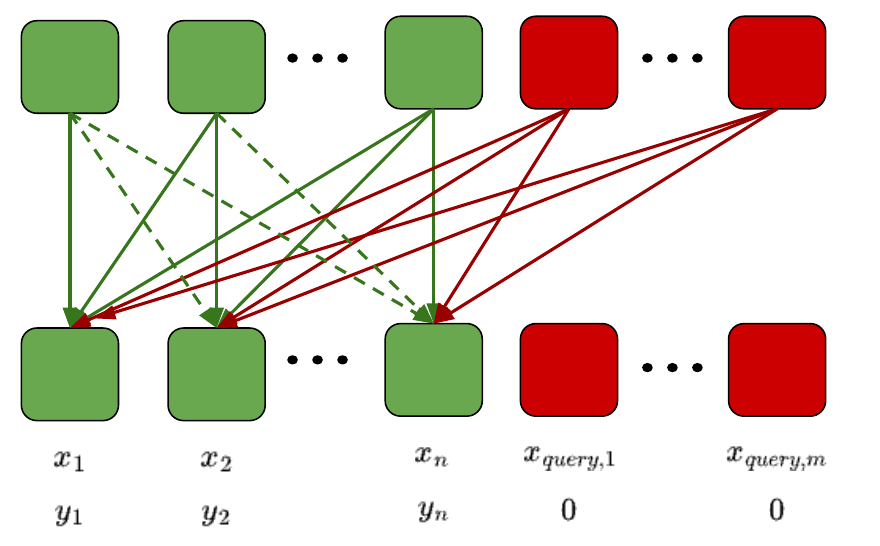}
   \caption{The illustration of the attention mask. Green arrows represent the attentions between in-context examples. The dashed arrows only applies for prefixLM. Red arrows represent the attentions from queries to in-context examples. The query examples should not attend to themselves because the inputs do not contain labels.}
   \label{fig:illu-attention}
\end{figure}

\subsection{Experiment settings for Section~\ref{sec:transformer-synthetic}}
In order to verify that our theorems can be qualitatively applied beyond LSA and linear regression, we conducted several experiments over various synthetic tasks using regular transformers.
We based our code from the repository of \citep{akyurek2022learning}\footnote{\url{https://github.com/google-research/google-research/tree/master/incontext}} and applied their default training hyperparameters of the code. We used a transformer of 24 layers with 128 hidden units and 2 heads. The FFN intermediate size is $4\times 128=512$. The learning schedule is based on cosine decay with base learning rate 1e-4, for 100 epochs.
In addition, since the target of the outputs of the in-context examples are 0 (see Fig.~\ref{fig:ml-icl}), we optionally add an additional L2 regularizer on the outputs of the in-context examples. See the comparison between the transformers with or without the L2-regularizer in Table~\ref{tab:l2_vs_nol2}. In Table~\ref{tab:share_vs_noshare} of the main paper, the reported numbers correspond to the SL-transformer with the L2 regularizer and the UL-transformer without the L2 regularizer.
Across all these settings prefixLM consistently beats causalLM as our theorem predicts.

\begin{table}
    \centering
    \begin{tabular}{c|c c c }
                 & LR  & N-LR & MC \\ \hline
        PrefixLM-SL-L2 & 8.6e-3  & 1.5e-4 & 24.1 \\
        CausalLM-SL-L2 & 1.9e-1 & 2.7e-3 & 27.0 \\\hline
        PrefixLM-SL-no-L2 & 6.7e-3  & 1.5e-4 & 24.5 \\
        CausalLM-SL-no-L2 & 5.0e-2 & 1.9e-3 & 30.5 \\\hline
        PrefixLM-UL-L2 & 7.6e-3  & 1.7e-4 & 26.7 \\
        CausalLM-UL-L2 & 4.4e-2 & 2.5e-3 & 30.4 \\\hline
        PrefixLM-UL-no-L2 & 2.5e-3  & 9.0e-5 & 27.6 \\
        CausalLM-UL-no-L2 & 1.6e-2 & 2.9e-3 & 32.1 \\\hline
    \end{tabular}
    \caption{The test query errors of the unshared-layer (UL) and sharing-layer (SL) transformer-ICLs with or without L2 regularizer on linear regression (LR), non-linear regression (NLR), and multiclass classification (MC) tasks.}
    \label{tab:l2_vs_nol2}
\end{table}

\subsection{The impact of the size of the training data}
\begin{figure*}
  \centering
  \includegraphics[width=0.32\linewidth]{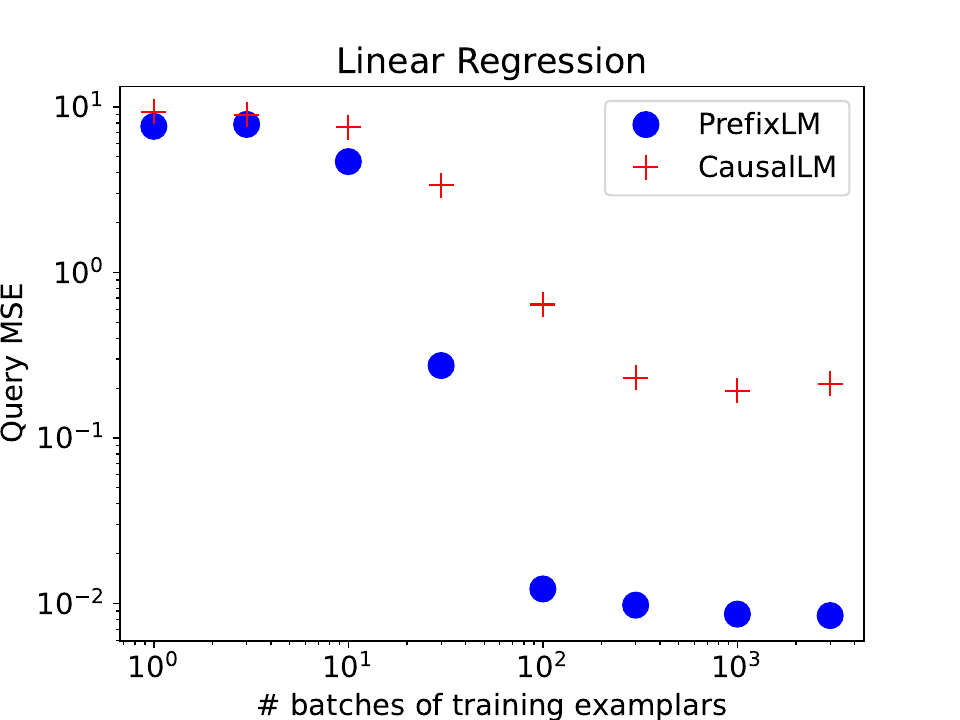}
  \includegraphics[width=0.32\linewidth]{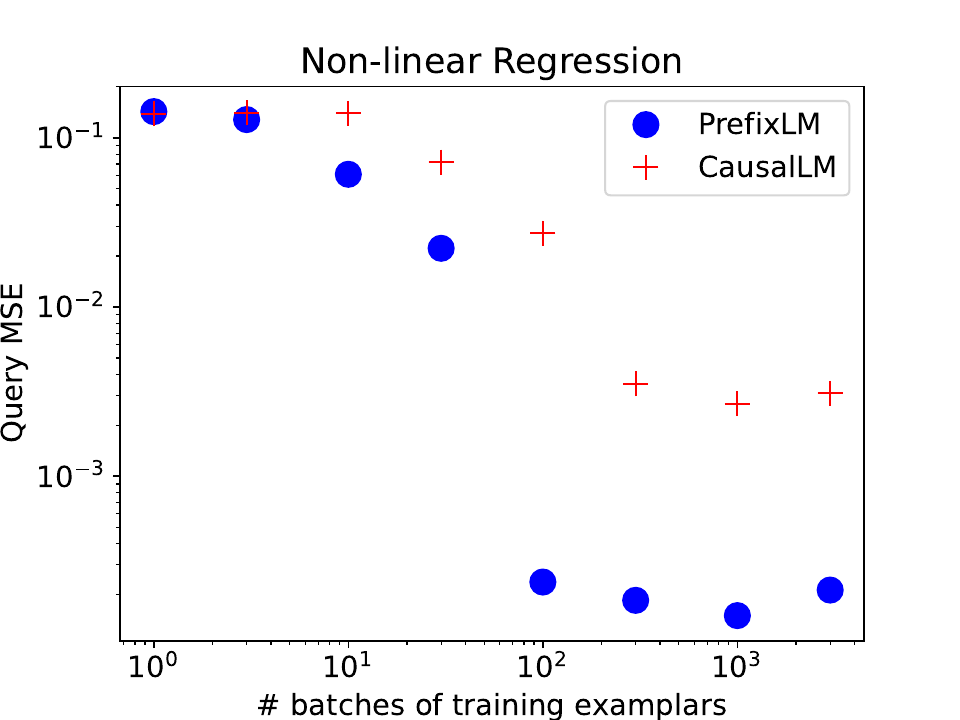}
  \includegraphics[width=0.32\linewidth]{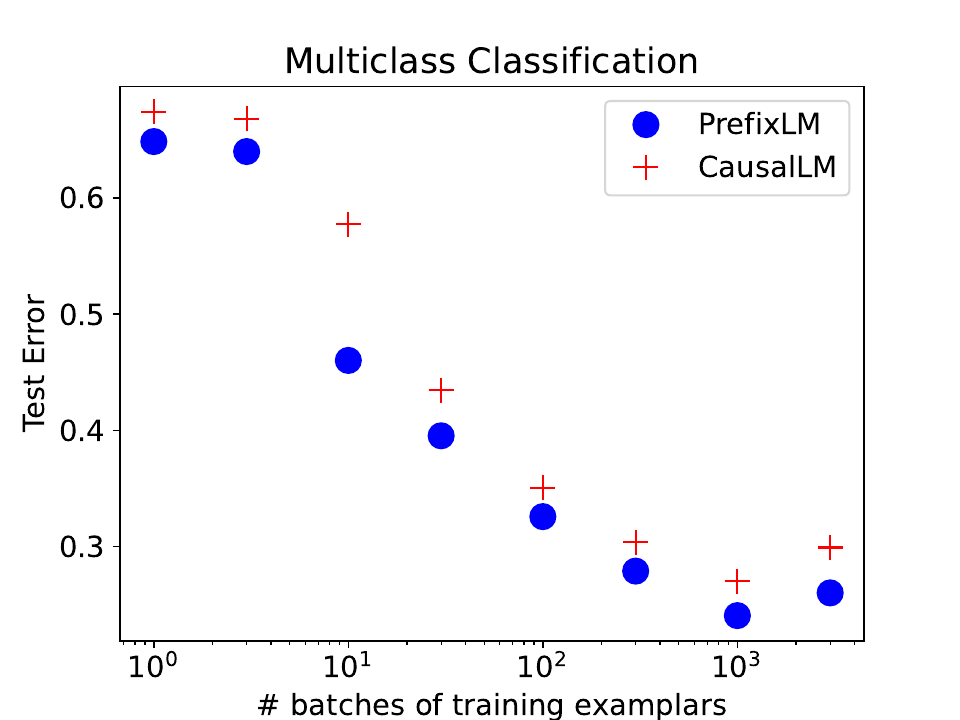}
   \caption{The test query errors of the SL-transformers based prefixLM/causalLM-ICLs with various numbers of training sequences on linear regression (left), non-linear regression (middle), and multiclass classification (right).}
   \label{fig:transformer-batchdata}
\end{figure*}
Here we investigate the performance of prefixLM and causalLM as a function of the number of training samples. In Fig.~\ref{fig:transformer-batchdata}, we plot their after having trained on 10 batches all the way up to 1000 batches (as in Section~\ref{sec:transformer-synthetic}).
We observe that when the amount of training data is low, ICL falls into the memorization regime, in which models perform perfectly on the training data, but do not generalize well to unseen test sequences. 
However, prefixLM transitions to the generalization regime earlier than causalLM, which is reflected by the positions of the largest gap between the two.
(30 batches in LR, 100 batches in N-LR, and 10 batches in MC). 

\subsection{Testing with fewer in-context examples}
In causalLM, different positions in the sequence are trained with different numbers of in-context examples (ICEs). This may bring advantage to pretrained causalLM models when tested with fewer number of in-context examples than it was trained on. 
To compare causalLM and prefixLM in such setting, we use the same models as before that were trained with 40 in-context examples, but test them on fewer (16, 24, 32) in-context examples. 
Note that 16 is the minimum number of examples to solve our 16-dim synthetic regression problems. 
The errors of prefixLM and causalLM are provided in the following Tables~\ref{tab:16ices}, \ref{tab:24ices}, \ref{tab:32ices}, where regression tasks (LR, N-LR) report mean squared errors and the MC task reports the classification error. 
From the tables we see that prefixLM still consistently outperforms causalLM, even when testing with fewer in-context examples than used during training time.
\begin{table}
    \centering
    \begin{tabular}{c|c c c }
        16 Test ICEs & LR  & N-LR & MC \\ \hline
        PrefixLM-SL & 1.01  & 2.1e-2 & 42.8 \\
        CausalLM-SL & 1.76 & 2.7e-2 & 43.3 \\\hline
        PrefixLM-UL & 0.97  & 1.9e-2 & 42.9 \\
        CausalLM-UL & 1.12 & 3.2e-2 & 46.6 \\
    \end{tabular}
    \caption{The test query errors with 16 ICEs on linear regression (LR), non-linear regression (NLR), and multiclass classification (MC) tasks.}
    \label{tab:16ices}
\end{table}
\begin{table}
    \centering
    \begin{tabular}{c|c c c }
        24 Test ICEs & LR  & N-LR & MC \\ \hline
        PrefixLM-SL & 1.4e-1  & 2.0e-3 & 33.4 \\
        CausalLM-SL & 7.0e-1 & 1.0e-2 & 35.9 \\\hline
        PrefixLM-UL & 1.0e-1  & 1.7e-3 & 37.1 \\
        CausalLM-UL & 1.3e-1 & 1.0e-2 & 41.2 \\
    \end{tabular}
    \caption{The test query errors with 24 ICEs on linear regression (LR), non-linear regression (NLR), and multiclass classification (MC) tasks.}
    \label{tab:24ices}
\end{table}
\begin{table}
    \centering
    \begin{tabular}{c|c c c }
        32 Test ICEs & LR  & N-LR & MC \\ \hline
        PrefixLM-SL & 2.4e-2  & 4.7e-4 & 32.4 \\
        CausalLM-SL & 3.1e-1 & 5.0e-3 & 34.6 \\\hline
        PrefixLM-UL & 9.5e-3  & 3.4e-4 & 36.2 \\
        CausalLM-UL & 4.0e-2 & 5.7e-3 & 37.3 \\
    \end{tabular}
    \caption{The test query errors with 32 ICEs on linear regression (LR), non-linear regression (NLR), and multiclass classification (MC) tasks.}
    \label{tab:32ices}
\end{table}

\subsection{Permutation on in-context examples}
We further consider a simple approach for mitigating the problems of causalLM by randomly permuting the in-context examples during training time.
This is motivated by the observation that for causalLM, every permutation representations a different view of the context in the example.
The results of this experiment (Table~\ref{tab:permute_ices}) show that this style of causalLM training indeed improves over the fixed order training setting compared to the unpermuted ICEs (Table \ref{tab:share_vs_noshare}). However, prefixLM still outperforms causalLM in general. 
\begin{table}
    \centering
    \begin{tabular}{c|c c c }
        Permuted ICEs & LR  & N-LR & MC \\ \hline
        PrefixLM-SL & 9.0e-3  & 1.5e-4 & 24.1 \\
        CausalLM-SL & 1.9e-1 & 2.5e-3 & 26.9 \\\hline
        PrefixLM-UL & 2.6e-3  & 9.5e-5 & 26.1 \\
        CausalLM-UL & 1.1e-2 & 1.8e-3 & 26.2 \\
    \end{tabular}
    \caption{The test query errors with randomly permuted ICEs on linear regression (LR), non-linear regression (NLR), and multiclass classification (MC) tasks.}
    \label{tab:permute_ices}
\end{table}

\subsection{In-context learning using PaLM2}
\label{sec:palm2}
Going beyond the publicly available T5 models, we further verify our findings by conducting FLAN-based finetuning experiments using the state-of-the-art PaLM2 family of models~\citep{palm2}. 
PaLM2 models were pretrained with a mixture of objectives that includes different LM types, 
which make them a relatively fair starting point to compare causalLM and prefixLM after finetuning. 
In practice we finetune three sizes of PaLM2 language models: 
Gecko, Otter and Unicorn\footnote{\url{https://blog.google/technology/ai/google-palm-2-ai-large-language-model/}}. 
We use the same default recipe for FLAN-PaLM2 finetuning~\citep{palm2,chung2022scaling} and finetune the PaLM2 checkpoints for either causalLM or prefixLM. We then evaluate the ICL capability of the finetuned models on the Massive Multi-task Language Understanding (5-shot MMLU) tasks~\citep{hendrycks2020measuring}.

Table~\ref{tab:palm} shows that for all three sizes of PaLM2, the MMLU accuracy (average over the 57 tasks) of prefixLM surpasses that of causalLM by more than 3\%. 
This result again empirically verifies that our conjecture generalizes to the practical case, using a state of the art LLM\footnote{
Besides of PaLM2, we also find that any checkpoint that is pretrained with a mixture of prefixLM and causalLM tends to do better with prefixLM for in-context learning.
However, we do not claim that prefixLM would necessarily outperform causalLM when using solely causalLM pretrained checkpoints.}.

\begin{table}
    \centering
    \begin{tabular}{c|c c c}
                 & Gecko & Otter & Unicorn\\ \hline
        PrefixLM & {\bf 46.6}  & {\bf 64.8} & {\bf 81.4} \\
        CausalLM & 43.3  & 61.0 & 78.0 \\
    \end{tabular}
    \caption{The average test query accuracies on 5-shot MMLU tasks with FLAN-finetuned PaLM2-Gecko/Otter/Unicorn prefixLM/causalLM checkpoints. \citep{palm2} reported a similar averaged accuracy of 81.2 on Unicorn-PrefixLM.}
    \label{tab:palm}
\end{table}

\subsection{In-context learning with Multimodal Models}
\label{sec:palix}
Lastly, we also demonstrate that prefix attention masks benefit ICL in multimodal models across various settings. We conducted experiments using both 4-shot and 8-shot COCO image captioning tasks on the Karpathy split~\citep{karpathy2015deep} using the PaLI-X model \citep{Chen2023PaLIXOS}, a 55B multimodal pretrained model.

The PaLI-X model employs an encoder-decoder architecture where ViT encoded image tokens and text tokens are fed to the multimodal encoder and decoder to generate outputs. During pretraining, the text prompts were split into two parts. The first part is the input to the multimodal prefix-encoder that self-attends to all the image and text tokens on the encoder side, following the style of prefixLM. The second part is the input to the causal-decoder that self-attends to only the previous text tokens on the decoder side, following the style of causalLM, and cross-attends to encoder tokens.

\begin{table}
    \centering
    \begin{tabular}{l|c c c}
                 &4-shot & 8-shot \\ \hline
        Prefix encoder & {\bf 106.7} & {\bf 107.5} \\
        Block-causal encoder &  104.8 & 106.0 \\ 
        Causal encoder & 102.3 & 104.9 \\   \hline
        Prefix decoder &{\bf 103.9} & {\bf 104.2} \\
        Causal decoder &102.4 & 92.9 \\
    \end{tabular}
    \caption{Cider scores of COCO captioning using various attention masks. The Prefix variant outperforms the Causal ones.
    Note that the official PaLI-X~\citep{Chen2023PaLIXOS} reported a 4-shot Cider of 107.6, which was also based on the prefix encoder mask, but was finetuned with additional image captioning data from the Conceptual Captions 3M dataset~\citep{sharma-etal-2018-conceptual}.}
    \label{tab:causal_prefix_comparison_pali}
\end{table}

The prefix-encoder and causal-decoder nature allows us to consider different variants of the attention masks and placements of the in-context texts to showcase the benefits of prefix attention masks. We design two main categories of few-shot experiments with five self-attention mask settings, detailed below. We finetune the PaLI-X pretrained model using each setting's attention mask with 4-shot Episodic WebLI dataset~\citep{Chen2023PaLIXOS} for 20k steps.

In the first category, we place the few-shot text tokens on the encoder side and study the effect of manipulating the encoder self-attention masks, leaving the causal-decoder unchanged. 
Specifically, considering a 2-shot ICL case for simplicity, we adapt the prefix encoder attention mask $A_{prefix}^{enc}$ in \eqref{eq:prefix_enc} into two causal variants, block-causal and causal encoder attention masks as $A_{b-causal}^{enc}$ in \eqref{eq:bcausal_enc}  and $A_{causal}^{enc}$ in \eqref{eq:causal_enc}. In this case, the block-causal version is more inline with exposing the encoder to the examples one at a time, while the causal one strictly follows auto-regressive attention.
\begin{align} 
A_{prefix}^{enc} = 
\begin{blockarray}{cccccc}
I_1 & T_1 & I_2 & T_2 & I_t \\
\begin{block}{(ccccc)c}
\mathbbm{1}  & \mathbbm{1}  & \mathbbm{1}  & \mathbbm{1}  & \mathbbm{1}  & I_1\\
\mathbbm{1}  & \mathbbm{1}  & \mathbbm{1}  & \mathbbm{1}  & \mathbbm{1}  & T_1\\
\mathbbm{1}  & \mathbbm{1}  & \mathbbm{1}  & \mathbbm{1}  & \mathbbm{1}  & I_2\\
\mathbbm{1}  & \mathbbm{1}  & \mathbbm{1}  & \mathbbm{1}  & \mathbbm{1}  & T_2\\
\mathbbm{1}  & \mathbbm{1}  & \mathbbm{1}  & \mathbbm{1}  & \mathbbm{1}  & I_t\\
\end{block}
\end{blockarray} \label{eq:prefix_enc}
\end{align}
\begin{align} 
A_{b-causal}^{enc} = 
\begin{blockarray}{cccccc}
I_1 & T_1 & I_2 & T_2 & I_t \\
\begin{block}{(ccccc)c}
\mathbbm{1}  & \mathbbm{1}  & \mathbbm{1}  & \mathbbm{1}  & \mathbbm{1}  & I_1\\
\mathbbm{1}  & \mathbbm{1}  & \mathbbm{1}  & \mathbbm{1}  & \mathbbm{1}  & T_1\\
 &  &  \mathbbm{1}  & \mathbbm{1}  & \mathbbm{1}& I_2\\
 &  &  \mathbbm{1}  & \mathbbm{1}  & \mathbbm{1}& T_2\\
 &  &   &   & \mathbbm{1}& I_t\\
\end{block}
\end{blockarray} \label{eq:bcausal_enc}
\end{align}
\begin{align} 
A_{causal}^{enc} = 
\begin{blockarray}{cccccc}
I_1 & T_1 & I_2 & T_2 & I_t \\
\begin{block}{(ccccc)c}
\mathbbm{1}  & \mathbbm{1}  & \mathbbm{1}  & \mathbbm{1}  & \mathbbm{1}  &  I_1\\
  & \diagdown & \mathbbm{1}  & \mathbbm{1}  & \mathbbm{1} & T_1\\
 &  & \mathbbm{1}  & \mathbbm{1}  & \mathbbm{1} & I_2\\
 &  &  & \diagdown & \mathbbm{1}& T_2\\
 &  &  &  & \mathbbm{1} & I_t\\
\end{block}
\end{blockarray} \label{eq:causal_enc}
\end{align}
$I_1$, $I_2$, $I_{t}$ denotes the image tokens for the two shots and the target and $T_1$, $T_2$  denotes the text tokens for the two shots. $\mathbbm{1}$ denotes a matrix of all 1s and ``$\diagdown$'' denote an upper triangular matrix with 1s. A 1 at row $i$ and column $j$ indicates that token $j$ is allowed to attend to token $i$.  
We report the results on few-shot COCO captioning in the top half of Table \ref{tab:causal_prefix_comparison_pali}. We observe consistent improvement over both 4- and 8-shot when changing the encoder attention mask from causal mask, to block causal mask, and then to prefix mask. 

\begin{align} 
A_{causal}^{dec} = 
\begin{blockarray}{cccc}
 T_1 & T_2 & T_t\\
\begin{block}{(ccc)c}
\diagdown & \mathbbm{1}  & \mathbbm{1}  & T_1\\
 & \diagdown & \mathbbm{1}  & T_2\\
 &  & \diagdown & T_t\\
\end{block}
\end{blockarray} \label{eq:causal_dec}
\end{align}

\begin{align} 
A_{prefix}^{dec} = 
\begin{blockarray}{cccc}
 T_1 & T_2 & T_t\\
\begin{block}{(ccc)c}
\mathbbm{1}  & \mathbbm{1}  & \mathbbm{1}  & T_1\\
\mathbbm{1}  & \mathbbm{1}  & \mathbbm{1}  & T_2\\
 &  & \diagdown & T_t\\ 
\end{block}
\end{blockarray} \label{eq:prefix_dec}
\end{align}

Similarly, in the second category, we place the few-shot text on the decoder side and study the effect of manipulating the decoder attention masks, leaving the prefix encoder unchanged. We adapt the causal decoder attention mask $A_{causal}^{dec}$ in \eqref{eq:causal_dec} to prefix attention mask $A_{prefix}^{dec}$ in \eqref{eq:prefix_dec}. Note that all the image tokens from the prefix-encoder side are visible to all text tokens (on the decoder) via cross attention. However, the image tokens cannot attend to the text because of the encoder-decoder architecture. The second half of Table~\ref{tab:causal_prefix_comparison_pali} reports the results of using prefix and causal decoder attention. Even though the decoder is pretrained in the causal manner, with additional finetuning using prefix masks, the new prefix decoder achieves a Cider score of 103.9 in 4-shot ICL, outperforming the finetuned causal decoder by 1.5. Furthermore, the prefix decoder also appears to be more robust when extrapolating to 8-shot evaluation (Cider 104.2), compared to the causal decoder (Cider 92.9).

In summary, the LLM experiments in Section~\ref{sec:flan-t5x} as well as the multimodal experiments in this section show that our conjectures hold up in practice with various types of large-scale models and a wide range of settings.

\end{document}